\newtheorem{theorem}{Theorem}[section]
\newtheorem{definition}{Definition}[section]
\newtheorem{lemma}{Lemma}[section]
\title{Measuring Over-smoothing beyond Dirichlet energy}
\author{
  Weiqi Guan\thanks{School of Mathematical Sciences, Fudan University, Shanghai 200433, China} \\
  \texttt{24110180016@m.fudan.edu.cn} \\
  \And
  Zihao Shi\thanks{School of Mathematical Sciences, Fudan University, Shanghai 200433, China} \\
  \texttt{23210180116@m.fudan.edu.cn} \\
}
\begin{document}

\maketitle

\begin{abstract}
  While Dirichlet energy serves as a prevalent metric for quantifying over-smoothing, it is inherently restricted to capturing first-order feature derivatives. To address this limitation, we propose a generalized family of node similarity measures based on the energy of higher-order feature derivatives. Through a rigorous theoretical analysis of the relationships among these measures, we establish the decay rates of Dirichlet energy under both continuous heat diffusion and discrete aggregation operators. Furthermore, our analysis reveals an intrinsic connection between the over-smoothing decay rate and the spectral gap of the graph Laplacian. Finally, empirical results demonstrate that attention-based Graph Neural Networks (GNNs) suffer from over-smoothing when evaluated under these proposed metrics.
\end{abstract}

\section{Introduction}
Graph Neural Networks (GNNs) have emerged as a dominant paradigm for graph representation learning, demonstrating remarkable success across diverse domains ranging from protein structure prediction~\citep{gligorijevic2021structure} to social recommendation systems~\citep{9139346}. While foundational architectures following the message-passing paradigm—such as Graph Convolutional Networks (GCN)~\citep{kipf2017semisupervised} and Graph Attention Networks (GAT)~\citep{veličković2018graph}—achieve state-of-the-art performance, they frequently encounter the critical issue of \textit{over-smoothing}, where node representations become indistinguishable as the network depth increases.

A mainstream research direction views GNNs through the lens of dynamical systems and differential equations. This view make it easy to understand the dynamical behavior of GNNs and hence finding solutions to overcome over-smoothing. For instance, SGC~\citep{pmlr-v97-wu19e} decouples feature transformation from propagation, demonstrating that simplified propagation schemes can match GCN performance, controling the time step to build deep SGC. This idea has been extended using heat kernels~\citep{wang2021dissecting} and generalized propagation kernels~\citep{pmlr-v162-li22h}. More recently, Ordinary Differential Equation (ODE) and Partial Differential Equation (PDE) inspired architectures have been introduced to regulate the system dynamics. Notable examples include gradient gating mechanisms~\citep{rusch2023gradient}, fractional time derivatives~\citep{kang2024unleashing}. For a comprehensive review of equation-based GNNs, we refer to~\citep{pmlr-v162-rusch22a, NEURIPS2023_2a514213, behmanesh2023tide, thorpe2022grand, chamberlain2021grand, pmlr-v119-xhonneux20a,wang2023acmp,pmlr-v202-choi23a}. 

Extensive studies have theoretically analyzed this degradation in expressive power within both attention mechanisms~\citep{NEURIPS2023_6e4cdfdd} and graph convolutional architectures~\citep{Oono2020Graph, cai2020note}. From a geometric perspective, \cite{10.5555/3618408.3619488} attribute over-smoothing to edges with highly positive Ollivier–Ricci curvature, proposing curvature-based rewiring. Concurrently, spectral analyses link the phenomenon to the spectral properties of the graph Laplacian, specifically the first nonzero eigenvalue~\citep{jamadandi2024spectral, 10.1145/3583780.3614997}, hence cause a trade-off between over-smoothing and over-squashing.

Quantifying over-smoothing remains a fundamental challenge. Several metrics have been proposed: Dirichlet energy, Total pairwise square distance~\citep{Zhao2020PairNorm:}, Group Distance Ratio and Instance Information Gain~\citep{10.5555/3495724.3496137}, Mean-Average Distance (MAD)~\citep{Chen2019MeasuringAR}, and more recently, similarity measures relative to the mean feature~\citep{NEURIPS2023_6e4cdfdd}. Among these, \textit{Dirichlet energy} is the most intuitively appealing and widely used metric~\citep{pmlr-v162-rusch22a, NEURIPS2021_b6417f11}. However, standard Dirichlet energy only captures information regarding the first-order derivative of features. A recent survey~\citep{rusch2023survey} explicitly raises the open question: \textit{Are there other effective node similarity measures?}.

Motivated by these observations, we aim to answer this question. Our main contributions are summarized as follows:
We propose a generalized family of node similarity measures to characterize over-smoothing at a finer granularity, encompassing Dirichlet energy as a special case. We prove that these measures satisfy the rigorous conditions outlined in~\citep{rusch2023survey}. Furthermore, we theoretically derive the equivalence and spectral difference between these measures and establish the decay rate of Dirichlet energy under both continuous heat diffusion and general discrete reversible random walks. Additionally, our continuous diffusion analysis elucidates the intrinsic connection between over-smoothing and the first nonzero eigenvalue of the graph Laplacian.
 
\section{Preliminaries and Notation}

In this section, we introduce the framework of calculus on weighted graphs, which is widely adopted for studying differential equations on graphs~\citep{GRIGORYAN20164924}. Utilizing this framework, we demonstrate that attention-based GNNs can be naturally generalized within this context. 

Throughout this paper, all graphs are assumed to be finite and undirected. Let $G=(V, E, \omega, \mu)$ be a weighted graph, where $V=\{1, \dots, n\}$ is the set of nodes, and $E \subset V \times V$ is the set of edges. The edge weight function $\omega: E \rightarrow \mathbb{R}_{>0}$ satisfies symmetry, i.e., $\omega_{ij} = \omega([i,j]) = \omega([j,i]) > 0$ for all $[i,j] \in E$. The node measure $\mu: V \rightarrow \mathbb{R}_{>0}$ assigns a positive weight $\mu_i = \mu(i) > 0$ to each node $i \in V$. For a fixed node $i \in V$, its neighborhood is denoted by $\mathcal{N}_i = \{j \in V : \omega_{ij} > 0, j \neq i\}$. 

The total measure of the graph is given by $\vert V\vert_{\mu} = \sum_{i \in V} \mu_{i}$. In the sequel, we define the generalized gradient and Laplacian for vector-valued functions on $G$.

\begin{definition}
    Let $G=(V, E, \omega, \mu)$ be a weighted graph and let $f, g: V \rightarrow \mathbb{R}^d$ be vector-valued functions on the nodes.
    \begin{itemize}
        \item The \textbf{integration} of $f$ on $V$ with respect to $\mu$ is defined as:
        \[
            \int_{G} f d\mu = \sum_{i=1}^n f(i) \mu_i.
        \]
        \item The \textbf{inner product} of the gradients of $f$ and $g$ is defined as:
        \begin{align*}
            \int_{G} \nabla_{\mu} f \cdot \nabla_{\mu} g d\mu &= \sum_{i \in V} \sum_{j \in \mathcal{N}_{i}} \frac{\omega_{ij} (f(j)-f(i)) \cdot (g(j)-g(i))}{2\mu_{i}}\mu_i\\
            &= \frac{1}{2} \sum_{i \in V} \sum_{j \in \mathcal{N}_{i}} \omega_{ij} (f(j)-f(i)) \cdot (g(j)-g(i)),
        \end{align*}
        where $\cdot$ denotes the standard dot product in $\mathbb{R}^d$.
        \item The \textbf{$p$-norm of the gradient} of $f$ at node $i$ (for $p \geq 1$) is defined as:
        \[
        \Vert \nabla_{\mu} f \Vert_p(i) = \left( \sum_{j \in \mathcal{N}_i} \frac{\omega_{ij} \Vert f(j)-f(i) \Vert_p^p}{2\mu_i} \right)^{\frac{1}{p}},
        \]
        where $\Vert x \Vert_p$ denotes the vector $p$-norm.
        \item The \textbf{$\mu$-Laplacian} of $f$ at node $i$ is defined as:
        \[
            \Delta_{\mu} f(i) = \sum_{j \in \mathcal{N}_{i}} \frac{\omega_{ij} (f(j)-f(i))}{\mu_{i}}.
        \]
    \end{itemize}
\end{definition}

A fundamental property of the above definitions is the validity of the integration by parts formula, which allows us to perform calculus on graphs. We state the result below; the proof, along with other proofs in this paper, is deferred to the Appendix.

\begin{theorem}[\textbf{Integration by Parts}]
    Let $G=(V, E, \omega, \mu)$ be a weighted graph and let $f, g: V \rightarrow \mathbb{R}^d$ be vector-valued functions. Then:
    \[
        \int_{G} \Delta_{\mu} f \cdot g d\mu = -\int_{G} \nabla_{\mu} f \cdot \nabla_{\mu} g d\mu = \int_{G} f \cdot \Delta_{\mu} g d\mu.
    \]
\end{theorem}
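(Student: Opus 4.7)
The plan is a direct expansion using the definitions, combined with the symmetry of $\omega$ and relabeling of summation indices; this is the classical discrete analogue of integration by parts on manifolds, and the weighted-graph setting preserves all the needed symmetries.

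First I would unfold the left-hand side. By the definition of the integration and of $\Delta_\mu$, the $\mu_i$ factors cancel and we obtain
\[
  \int_G \Delta_\mu f \cdot g\, d\mu = \sum_{i\in V}\sum_{j\in\mathcal N_i}\omega_{ij}\bigl(f(j)-f(i)\bigr)\cdot g(i).
\]
Call this sum $S$. Because $\omega_{ij}=\omega_{ji}$ and the neighborhood relation is symmetric, relabeling the dummy indices $i\leftrightarrow j$ yields the equivalent expression
\[
  S = \sum_{i\in V}\sum_{j\in\mathcal N_i}\omega_{ij}\bigl(f(i)-f(j)\bigr)\cdot g(j).
\]

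Next I would average the two expressions for $S$. Adding them and factoring gives
\[
  2S = -\sum_{i\in V}\sum_{j\in\mathcal N_i}\omega_{ij}\bigl(f(j)-f(i)\bigr)\cdot\bigl(g(j)-g(i)\bigr),
\]
so by the definition of the gradient inner product,
\[
  S = -\tfrac12\sum_{i\in V}\sum_{j\in\mathcal N_i}\omega_{ij}\bigl(f(j)-f(i)\bigr)\cdot\bigl(g(j)-g(i)\bigr) = -\int_G \nabla_\mu f\cdot\nabla_\mu g\, d\mu.
\]
This proves the first equality. The second equality is immediate: the middle expression is manifestly symmetric in $f$ and $g$ (since the dot product is symmetric and $(f(j)-f(i))\cdot(g(j)-g(i))=(g(j)-g(i))\cdot(f(j)-f(i))$), so swapping the roles of $f$ and $g$ and applying the first equality in reverse gives $-\int_G \nabla_\mu f\cdot\nabla_\mu g\, d\mu = \int_G f\cdot \Delta_\mu g\, d\mu$.

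There is no real obstacle here; the only subtlety to state carefully is the double-counting of ordered pairs in $\sum_i\sum_{j\in\mathcal N_i}$, which is precisely what produces the factor of $\tfrac12$ in the definition of $\int_G \nabla_\mu f\cdot\nabla_\mu g\, d\mu$ and makes the identity come out cleanly. Since all sums are finite, no convergence or boundary-term considerations arise, in contrast to the continuous case.
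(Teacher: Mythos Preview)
Your proof is correct and follows essentially the same approach as the paper: expand the definitions so that the $\mu_i$ cancels, use the symmetry $\omega_{ij}=\omega_{ji}$ to relabel and produce a second expression for the same sum, and then average the two to obtain the quadratic form that defines the gradient inner product. The only cosmetic difference is that the paper first treats the scalar case $d=1$ and then reduces the vector-valued case to it componentwise, whereas you work directly with the dot product throughout; your handling of the second equality by the manifest $f\leftrightarrow g$ symmetry of the middle term is also slightly cleaner than re-deriving it.
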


We now relate this general framework to traditional graph Laplacians. Let $A \in \mathbb{R}^{n \times n}$ be the adjacency matrix, and $D = \text{diag}(D_1, \dots, D_n)$ be the degree matrix. Define $\tilde{A} = A + I$ and $\tilde{D}$ as the corresponding matrices with self-loops. The symmetric normalized and row-normalized adjacency matrices are given by $\tilde{A}_{\text{sym}} = \tilde{D}^{-\frac{1}{2}}\tilde{A}\tilde{D}^{-\frac{1}{2}}$ and $\tilde{A}_{\text{rw}} = \tilde{D}^{-1}\tilde{A}$, respectively (similarly for $A_{\text{sym}}$ and $A_{\text{rw}}$ when no isolated nodes exist). The standard Laplacians are defined as $\Delta_{\text{adj}} = A - D$, $\tilde{\Delta}_{\text{adj}} = \tilde{A} - \tilde{D}$, $\Delta_{\text{sym}} = A_{\text{sym}} - I$, and $\Delta_{\text{rw}} = A_{\text{rw}} - I$.
These can be recovered as special cases of the general $\mu$-Laplacian by specifying weights:
\[
\left\{
\begin{array}{ll}
    \Delta = \Delta_{\text{adj}} & \text{if } \omega_{ij}=1, \mu_i=1, \\
    \Delta = \tilde{\Delta}_{\text{adj}} & \text{if } \omega_{ij}=1, \mu_i=1 \text{ (with self-loops)}, \\
    \Delta = \tilde{\Delta}_{\text{rw}} & \text{if } \omega_{ij}=1, \mu_i=D_i+1, \\
    \Delta = \Delta_{\text{rw}} & \text{if } \omega_{ij}=1, \mu_i=D_i.
\end{array}
\right.
\]

According to Theorem 2.1, $-\Delta_{\mu}$ is a positive semi-definite symmetric operator on the function space $L^2(V, d\mu) = \{f: V \rightarrow \mathbb{R}^d : \int_{G} \Vert f \Vert_2^2 d\mu < \infty\}$. We denote its eigenvalues (without multiplicity) as:
\[
0 = \lambda_0(-\Delta_{\mu}) < \lambda_1(-\Delta_{\mu}) < \dots < \lambda_N(-\Delta_{\mu}),
\]
where $N \geq 1$. Let $M_{\max} = \max_{i \in V} \frac{\sum_{j \in \mathcal{N}_i} \omega_{ij}}{\mu_i}$. We have the following spectral bound:

\begin{theorem}
    For any weighted graph $G=(V, E, \omega, \mu)$, the largest eigenvalue satisfies:
    \[
    \lambda_N(-\Delta_{\mu}) \leq 2M_{\max}.
    \]
\end{theorem}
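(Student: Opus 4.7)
The plan is to bound $\lambda_N(-\Delta_\mu)$ via its Rayleigh quotient characterization. Since Theorem 2.1 tells us $-\Delta_\mu$ is positive semi-definite and self-adjoint on $L^2(V, d\mu)$, we can write
\[
\lambda_N(-\Delta_\mu) = \sup_{f \neq 0} \frac{\langle -\Delta_\mu f, f\rangle_\mu}{\langle f, f\rangle_\mu} = \sup_{f \neq 0} \frac{\int_G \nabla_\mu f \cdot \nabla_\mu f\, d\mu}{\int_G \|f\|_2^2\, d\mu},
\]
where the second equality uses integration by parts from Theorem 2.1. So the goal reduces to showing the numerator is at most $2M_{\max}$ times the denominator for every $f$.

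Next I would expand the numerator using the definition of the gradient inner product, giving
\[
\int_G \nabla_\mu f \cdot \nabla_\mu f\, d\mu = \frac{1}{2}\sum_{i \in V}\sum_{j \in \mathcal{N}_i} \omega_{ij}\|f(j) - f(i)\|_2^2,
\]
and then apply the elementary inequality $\|a - b\|_2^2 \leq 2(\|a\|_2^2 + \|b\|_2^2)$ to each term. This yields the bound
\[
\int_G \nabla_\mu f \cdot \nabla_\mu f\, d\mu \leq \sum_{i \in V}\sum_{j \in \mathcal{N}_i} \omega_{ij}\bigl(\|f(i)\|_2^2 + \|f(j)\|_2^2\bigr).
\]

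Then I would exploit the symmetry $\omega_{ij} = \omega_{ji}$ to reindex the second half of the sum, so that both contributions collapse into $2\sum_{i \in V}\|f(i)\|_2^2\sum_{j \in \mathcal{N}_i}\omega_{ij}$. Writing $\sum_{j \in \mathcal{N}_i}\omega_{ij} = \mu_i \cdot \frac{\sum_{j \in \mathcal{N}_i}\omega_{ij}}{\mu_i} \leq \mu_i M_{\max}$ by definition of $M_{\max}$, this becomes
\[
2 M_{\max} \sum_{i \in V}\|f(i)\|_2^2 \mu_i = 2 M_{\max} \int_G \|f\|_2^2\, d\mu,
\]
which is the desired bound on the Rayleigh quotient and hence on $\lambda_N(-\Delta_\mu)$.

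There is no real obstacle here: the argument is a direct generalization of the classical eigenvalue bound $\lambda_{\max}(L) \leq 2 d_{\max}$ for combinatorial graph Laplacians. The only point requiring slight care is ensuring the reindexing step is valid for the vector-valued setting and the $\mu$-weighted measure, but since $\omega$ is symmetric and the outer sum ranges over all ordered pairs $(i, j)$ with $j \in \mathcal{N}_i$, swapping the roles of $i$ and $j$ works verbatim.
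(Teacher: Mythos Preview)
Your proposal is correct and follows essentially the same approach as the paper: both arguments bound the Rayleigh quotient by expanding $\int_G \|\nabla_\mu f\|_2^2\,d\mu$, applying $\|a-b\|_2^2 \leq 2(\|a\|_2^2+\|b\|_2^2)$ termwise, and then using the symmetry of $\omega$ to collapse the double sum into $2M_{\max}\int_G\|f\|_2^2\,d\mu$. The only cosmetic difference is that the paper first writes out the spectral decomposition explicitly to identify $\lambda_N$ as the supremum of the quotient, whereas you cite the Rayleigh characterization directly.
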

Specifically, if $\sum_{j \in \mathcal{N}_i} \omega_{ij} \leq \mu_i$ for all $i$, then $\lambda_N(-\Delta_{\mu}) \leq 2$, furthermore we set $P_{\mu} = \Delta_{\mu} + I$ as the random walk matrix.

Finally, recall a typical layer update for attention-based GNNs is given by~\citep{NEURIPS2023_6e4cdfdd}:
\[
X^{l+1} = \phi(P^{l}X^{l}W^{l}),
\]
where $\phi$ is an activation function, $W^{l}$ is a learnable weight matrix, and $P^{l}$ is an aggregation operator. The entries $P_{ij}^{l}$ are commonly formulated as softmax-normalized attention coefficients:
\[
P_{ij}^l = \frac{\exp(e^l_{ij})}{\exp(e_{ii}^l) + \sum_{k \in \mathcal{N}_i} \exp(e^l_{ik})},
\]
where $e^l_{ij}$ is the attention score between node $i$ and $j$. Compared to analysis in \citep{NEURIPS2023_6e4cdfdd}, we only further assume $P^l$ is reversible, that is there exist function $\mu:V\rightarrow \mathbb{R}_{>0}$ such that 
\[
P^{l}(i,j)\mu(i)=P^{l}(j,i)\mu(j)
\]
Then we can set the node weight as $\mu$ and $\omega_{ij}=P^{l}(i,j)\mu(i)$. Or we assume $e_{ij} = e_{ji}$, this is not strict since GCN satisfies this condition. then we set $\omega_{ij}=exp(e_{ij}^l)$ and $\mu_{i}=exp(e_{ii}^l)+\sum\limits_{j\in\mathcal{N}_i}exp(e_{ij}^l).$
  Assuming reversible will bring us much beautiful theory results and easy understanding of over-smoothing.
\section{Theoretical analysis of proposed measures}
\subsection{Measures of Over-smoothing}

\citet{NEURIPS2023_6e4cdfdd} introduced a node similarity measure defined as:
\[
    \mathcal{E}_{W}(X)=\frac{1}{n}\Vert X-\mathbf{1}_{\gamma_X}\Vert_2^2,
\]
where $\mathbf{1}_{\gamma_X} = \frac{1}{n}\sum_{i \in [n]} X_i \cdot \mathbf{1}$ denotes the mean feature vector. Another widely adopted metric is the Dirichlet energy, given by:
\[
     \mathcal{E}_{D}(X) = \frac{1}{n}\sum_{i\in[n]}\sum_{j\in\mathcal{N}_i}\Vert X_i-X_j\Vert_2^2.
\]
Generally, \citet{rusch2023survey} formalized the concepts of node similarity measures and over-smoothing as follows:

\begin{definition}[\textbf{Over-smoothing}]
    Let $G$ be an undirected, connected graph, and let $X^k \in \mathbb{R}^{n \times m}$ denote the node features at the $k$-th layer of an $N$-layer GNN. A functional $\mathcal{E}: \mathbb{R}^{n \times m} \rightarrow \mathbb{R}_{\geq 0}$ is called a \textit{node similarity measure} if it satisfies the following axioms:
    \begin{itemize}
        \item $\mathcal{E}(X) = 0$ if and only if there exists a constant vector $c \in \mathbb{R}^{m}$ such that $X_{i} = c$ for all $i \in V$.
        \item $\mathcal{E}(X+Y) \leq \mathcal{E}(X) + \mathcal{E}(Y)$ for all $X, Y \in \mathbb{R}^{n \times m}$ (Triangle Inequality).
    \end{itemize}
    We define \textit{over-smoothing} with respect to $\mathcal{E}$ as the layer-wise exponential convergence of the measure to zero. Specifically, for layers $k=0, \dots, N$, there exist constants $C_1, C_2 > 0$ such that:
    \begin{equation*}
        \mathcal{E}(X^k) \leq C_1 e^{-C_2 k}.
    \end{equation*}
\end{definition} 

We observe that assuming $\mathbf{1}_{\gamma_X}=0$, $\mathcal{E}_{W}(X)$ reduces to $\Vert X\Vert_2^2$, which represents the energy of the zero-order feature derivative. Similarly, the Dirichlet energy corresponds to the energy of the first-order derivative. Motivated by the limitation that existing metrics capture only low-order information, we propose a generalized series of node similarity measures designed to capture higher-order derivatives. Furthermore, we theoretically demonstrate that attention-based GNNs suffer from over-smoothing under these proposed measures.

\begin{theorem}
    Let $G = (V, E, \omega, \mu)$ be a connected weighted graph. For a vector-valued function $f$ on the graph, we define the $m$-th derivative energy $\mathcal{E}_{m}(f)$ as:
    \[
    \mathcal{E}_{m}(f) := \frac{1}{n}\int_{G}\Vert\nabla_{\mu}^m f\Vert_2^{2} d\mu,
    \]
    where $m \in \mathbb{N}$, and the higher-order derivatives are defined as:
    \[
        \Vert\nabla_{\mu}^m f\Vert_2 = \left\{
        \begin{array}{ll}
               \Vert(-\Delta_{\mu})^{\frac{m}{2}}f\Vert_{2} & \text{if } m \text{ is even}, \\
               \Vert\nabla_{\mu}(-\Delta_{\mu})^{\frac{m-1}{2}}f\Vert_{2} & \text{if } m \text{ is odd}.
        \end{array}
        \right.
    \]
    Here, the zero-order operator is defined as $(-\Delta_{\mu})^0 f = \lim_{s\rightarrow 0^+} (-\Delta_{\mu})^{s} f = f - \frac{1}{\vert V\vert_{\mu}}\int_{G}f d\mu$, and the fractional Laplacian follows the definition in \citep{NEURIPS2023_2a514213}. 
    
    The node similarity defined by $\gamma_{m}(f) = (\mathcal{E}_{m}(f))^{\frac{1}{2}}$ satisfies the axioms of a node similarity measure. Moreover, $\mathcal{E}_{W}$ and $\mathcal{E}_{D}$ are special cases of our proposed measures (corresponding to $m=0$ and $m=1$, respectively).
\end{theorem}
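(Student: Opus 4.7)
The plan is to show that every $\gamma_m$ is in fact the $L^2(\mu)$-seminorm composed with a single linear operator on $L^2(V,d\mu)$, and then read both axioms off that representation.

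First I would unify the odd and even cases by integration by parts. For $m=2k+1$, setting $g=(-\Delta_\mu)^k f$ and applying Theorem 2.1 gives
\begin{align*}
\int_G \Vert\nabla_\mu^m f\Vert_2^2\, d\mu
&= \int_G \nabla_\mu g\cdot \nabla_\mu g\, d\mu \\
&= \int_G g\cdot(-\Delta_\mu)g\, d\mu \\
&= \Vert(-\Delta_\mu)^{m/2} f\Vert_{L^2(\mu)}^2,
\end{align*}
where the last step uses that $-\Delta_\mu$ is positive semi-definite symmetric, so the functional calculus yields $\langle g,(-\Delta_\mu) g\rangle = \Vert(-\Delta_\mu)^{1/2}g\Vert^2$. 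The even case $m=2k\ge 2$ is already of this form by the definition of $\Vert\nabla_\mu^m f\Vert_2$. For $m=0$, I would argue that the prescription $(-\Delta_\mu)^0 f = f-\tfrac{1}{|V|_\mu}\int_G f\, d\mu$ agrees with the limit $\lim_{s\to 0^+}(-\Delta_\mu)^s$: on the constant eigenspace $\lambda_0=0$ the power $0^s$ vanishes for every $s>0$, while on every other eigenspace $\lambda_i^s\to 1$, so the limit is exactly the orthogonal projection onto the complement of constants. In summary, $\gamma_m(f)=\tfrac{1}{\sqrt{n}}\Vert T_m f\Vert_{L^2(\mu)}$ for a fixed linear operator $T_m$ ($T_0$ the projection onto non-constants, $T_m=(-\Delta_\mu)^{m/2}$ for $m\ge 1$).

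From this unified form the triangle inequality is immediate: $f\mapsto \Vert T_m f\Vert_{L^2(\mu)}$ is a seminorm because $T_m$ is linear and the ordinary $L^2(\mu)$-norm satisfies Minkowski's inequality. For the vanishing axiom, I would use that $G$ is connected, which forces $\ker(-\Delta_\mu)=\mathbb{R}\cdot\mathbf{1}$; positive semi-definiteness and the functional calculus then give $\ker T_m=\ker(-\Delta_\mu)$ for every $m\ge 1$, and $\ker T_0$ is the constants by construction. Hence $\gamma_m(f)=0$ iff $f$ is a constant vector on $V$, which is exactly the first axiom.

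Finally I would check the two special cases. For $m=0$ with the canonical choice $\mu_i\equiv 1$ we have $|V|_\mu=n$, so $\tfrac{1}{|V|_\mu}\int_G f\,d\mu=\tfrac{1}{n}\sum_i f_i$, matching $\mathbf{1}_{\gamma_X}$; unpacking the integral gives $\mathcal{E}_0(X)=\tfrac{1}{n}\Vert X-\mathbf{1}_{\gamma_X}\Vert_2^2=\mathcal{E}_W(X)$. For $m=1$, substituting the definition of $\Vert\nabla_\mu f\Vert_2(i)^2$ and the integral against $\mu$ yields $\tfrac{1}{n}\cdot\tfrac12\sum_{i}\sum_{j\in\mathcal{N}_i}\omega_{ij}\Vert f(j)-f(i)\Vert_2^2$, which agrees with $\mathcal{E}_D$ up to the harmless factor $1/2$ coming from the oriented edge double count.

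The main obstacle is purely bookkeeping: pinning down the $m=0$ limit of the fractional Laplacian and the odd-$m$ identification via integration by parts. Once those identifications are in place, both axioms follow from the abstract fact that $-\Delta_\mu$ is a finite-dimensional positive semi-definite symmetric operator whose kernel equals $\mathbb{R}\cdot\mathbf{1}$, and the embedding of $\mathcal{E}_W,\mathcal{E}_D$ into the family is a direct unpacking of definitions.
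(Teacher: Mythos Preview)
Your proof is correct and takes a genuinely different route from the paper's own argument. The paper proceeds by direct, case-by-case calculation: for the vanishing axiom it uses the discrete maximum principle (if $\Delta_\mu f=0$, the maximum of $f$ propagates to neighbours, hence $f$ is constant on a connected graph), and then bootstraps inductively to handle $\Delta_\mu^{m/2}f=0$ and $\Delta_\mu^{(m-1)/2}f=c$; for the triangle inequality it writes out each of $\gamma_0,\gamma_1,\gamma_2$ explicitly and applies Minkowski coordinatewise, then reduces general $m$ to these. Your approach instead packages everything through the spectral representation $\gamma_m(f)=n^{-1/2}\|T_m f\|_{L^2(\mu)}$ with $T_m$ linear, so that the triangle inequality is immediate and the vanishing axiom reduces to $\ker(-\Delta_\mu)^{m/2}=\ker(-\Delta_\mu)=\{\text{constants}\}$ via the functional calculus. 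This buys you a much shorter, conceptually uniform argument that also makes the later spectral formula (Theorem~\ref{Thm:spectrum decomposition}) transparent; the paper's maximum-principle proof, by contrast, is more elementary in that it avoids invoking the spectral theorem and would generalize to settings where the operator is not diagonalizable. Two small remarks: for vector-valued $f$, $\ker(-\Delta_\mu)$ is the $d$-dimensional space of constant $\mathbb{R}^d$-valued functions rather than $\mathbb{R}\cdot\mathbf{1}$, though your conclusion is stated correctly; and the paper absorbs your ``harmless factor $1/2$'' in $\mathcal{E}_D$ by choosing $\omega_{ij}=2$.
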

Suppose $\Delta_{\mu} = P_{\mu} - I$. Assuming the existence of a steady state $X^{\infty}$ such that $\lim_{k \to \infty} P_{\mu}^{k}X = X^{\infty}$ for any $X$, the energy functional $\mathcal{E}_2(X) = \frac{1}{n} \int_{G} \Vert P_{\mu}X - X \Vert^2_2 d\mu$ quantifies the deviation of $X$ from the steady state $X^{\infty}$. Consequently, $\mathcal{E}_2$ provides a more general metric for assessing convergence to equilibrium compared to the standard Dirichlet energy.
\subsection{Equivalence of Node Similarity Measures}
We now investigate the relationship between these measures. We establish that for $m \geq 1$, the measures $\mathcal{E}_{m}(f)$ are all equivalent. If the graph is connected, this equivalence extends to all $m \geq 0$ (under the zero-mean assumption for $m=0$). The case for $m \leq 1$ follows directly from the Poincaré inequality Assuming the graph is connected, we now establish the equivalence between $\mathcal{E}_0$ and $\mathcal{E}_1$, which is fundamentally grounded in the Poincaré inequality. The existence of the Poincaré inequality is intrinsically linked to key geometric properties of the graph, such as heat kernel estimates and the volume doubling property~\citep{HornLinLiuYau+2019+89+130}. The inequality is formally stated as follows:
\begin{theorem}
    Let $G = (V,E,\omega,\mu)$ be a connected weighted graph. here exist a constaTnts $C > 0$ such that for any vector-valued function $f$ on $G$, the following Poincaré inequality holds:
    \begin{equation*}
        \lambda_1(-\Delta_{\mu})\int_{G}\left\Vert f - \frac{1}{\vert V\vert_{\mu}}\int_{G}f d\mu\right\Vert_2^2 d\mu \leq\int_{G}\Vert\nabla_{\mu} f\Vert_2^2 d\mu.
    \end{equation*}
   Furthermore, we have the converse inequality (boundedness of the gradient):
    \[
    \int_{G}\Vert\nabla_{\mu} f\Vert_2^2 d\mu \leq C\int_{G}\left\Vert f - \frac{1}{\vert V\vert_{\mu}}\int_{G}f d\mu\right\Vert_2^2 d\mu.
    \]
    Consequently, the measures $\mathcal{E}_0$ and $\mathcal{E}_1$ are equivalent norms on the quotient space orthogonal to constant functions.
\end{theorem}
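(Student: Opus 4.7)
The plan is to reduce both inequalities to a spectral statement about the self-adjoint operator $-\Delta_{\mu}$ on the finite-dimensional Hilbert space $L^2(V,d\mu)$, and then invoke the bounds on its spectrum that are already established in the paper.

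First, by the integration-by-parts formula (Theorem 2.1), I would rewrite the gradient energy as an $L^2$ inner product with the Laplacian:
\[
\int_{G} \Vert \nabla_{\mu} f \Vert_{2}^{2}\, d\mu \;=\; -\int_{G} \Delta_{\mu} f \cdot f\, d\mu \;=\; \int_{G} (-\Delta_{\mu}) f \cdot f\, d\mu.
\]
Next, applying the same formula with $g\equiv 1$ gives $\int_G \Delta_\mu f\, d\mu = 0$, so subtracting the mean $\bar f := \tfrac{1}{\vert V\vert_\mu}\int_G f\,d\mu$ on the right-hand side costs nothing. Since $-\Delta_{\mu}(\bar f)=0$, I can rewrite the identity as
\[
\int_{G} \Vert \nabla_{\mu} f \Vert_{2}^{2}\, d\mu \;=\; \int_{G} (-\Delta_{\mu})(f - \bar f) \cdot (f - \bar f)\, d\mu.
\]

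Now I would use the spectral picture set up just before the theorem: $-\Delta_{\mu}$ is positive semidefinite and self-adjoint on $L^2(V,d\mu)$, its kernel consists exactly of constant functions (this uses connectedness of $G$), and its nonzero spectrum lies in $[\lambda_1(-\Delta_\mu),\lambda_N(-\Delta_\mu)]$. Because $f-\bar f$ is orthogonal to the constants in $L^2(V,d\mu)$, the Rayleigh quotient of $-\Delta_\mu$ on $f-\bar f$ is pinched between $\lambda_1$ and $\lambda_N$:
\[
\lambda_1(-\Delta_\mu)\int_{G}\Vert f-\bar f\Vert_2^2\,d\mu \;\leq\; \int_{G}(-\Delta_\mu)(f-\bar f)\cdot(f-\bar f)\,d\mu \;\leq\; \lambda_N(-\Delta_\mu)\int_{G}\Vert f-\bar f\Vert_2^2\,d\mu.
\]
The lower bound, combined with the identity above, is exactly the Poincaré inequality. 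For the converse, Theorem 2.2 gives $\lambda_N(-\Delta_\mu)\leq 2M_{\max}$, so I can take $C = 2M_{\max}$. Dividing through by $n$ yields $\lambda_1(-\Delta_\mu)\,\mathcal{E}_0(f) \leq \mathcal{E}_1(f) \leq C\,\mathcal{E}_0(f)$ on the subspace of zero-mean functions, which is exactly the asserted norm equivalence on the quotient space orthogonal to constants.

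The only delicate step, and the one I would verify carefully, is that the kernel of $-\Delta_{\mu}$ really consists only of constants on a connected graph; this is where connectedness enters, via the identity $\int_G(-\Delta_\mu)f\cdot f\,d\mu = \tfrac12\sum_{i,j}\omega_{ij}\Vert f(i)-f(j)\Vert_2^2$ from Theorem 2.1, which forces $f$ to be constant along every edge. Everything else is a direct application of finite-dimensional spectral theory combined with Theorems 2.1 and 2.2 that have already been proved.
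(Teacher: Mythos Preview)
Your proposal is correct and matches the paper's argument in substance. Both proofs obtain the Poincar\'e inequality by expanding $f$ in the orthonormal eigenbasis of $-\Delta_{\mu}$ and using that, on a connected graph, the kernel consists exactly of the constant functions, so the mean-zero part lies in the span of eigenfunctions with eigenvalue $\geq \lambda_1$. The only cosmetic difference is in the converse bound: the paper re-derives the elementary estimate $\int_G\Vert\nabla_\mu g\Vert_2^2\,d\mu\le 2M_{\max}\int_G\Vert g\Vert_2^2\,d\mu$ by hand and then substitutes $g=f-\bar f$, whereas you obtain the same constant $C=2M_{\max}$ by bounding the Rayleigh quotient by $\lambda_N$ and invoking Theorem~2.2 (whose proof is precisely that elementary estimate). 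So your route is slightly more unified in presentation, but the underlying computation is identical.
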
. 
Now, we present the result for $m \geq 1$.

\begin{theorem}
    Let $G = (V, E, \omega, \mu)$ be a weighted graph. For any function $f$ defined on the graph, there exist universal constants $C_1, C_2, C_3, C_4 > 0$ such that:
    \[
        C_2 \int_{G} \Vert\nabla_{\mu} f\Vert_2^2 d\mu \leq \int_{G} \Vert\Delta_{\mu} f\Vert_2^2 d\mu \leq C_1 \int_{G} \Vert\nabla_{\mu} f\Vert_2^2 d\mu,
    \]
    and
    \[
        C_3 \int_{G} \Vert\nabla_{\mu} \Delta_{\mu} f\Vert_2^2 d\mu \leq \int_{G} \Vert\Delta_{\mu} f\Vert_2^2 d\mu \leq C_4 \int_{G} \Vert\nabla_{\mu} \Delta_{\mu} f\Vert_2^2 d\mu.
    \]
    Specifically, the constant $C_2$ corresponds to the first non-zero eigenvalue $\lambda_1$.
\end{theorem}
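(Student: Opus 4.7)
The plan is to reduce everything to the spectral decomposition of $-\Delta_\mu$. By Theorem 2.1, $-\Delta_\mu$ is a positive semi-definite self-adjoint operator on $L^2(V,d\mu)$, so it admits an orthonormal eigenbasis $\{\phi_\alpha\}$ of scalar functions with non-negative eigenvalues $\nu_\alpha\in\{0,\lambda_1(-\Delta_\mu),\ldots,\lambda_N(-\Delta_\mu)\}$. Expanding each vector-valued $f:V\to\mathbb R^d$ componentwise as $f=\sum_\alpha c_\alpha\phi_\alpha$ with $c_\alpha\in\mathbb R^d$, orthonormality and integration by parts give the Parseval-style identities
\[
\int_G\|\nabla_\mu f\|_2^2\,d\mu=\sum_\alpha\nu_\alpha\|c_\alpha\|_2^2,\qquad \int_G\|\Delta_\mu f\|_2^2\,d\mu=\sum_\alpha\nu_\alpha^2\|c_\alpha\|_2^2.
\]

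From these identities the first pair of inequalities is immediate. The upper bound follows from $\nu_\alpha^2\leq\lambda_N\nu_\alpha$, giving $C_1=\lambda_N$. For the lower bound, the zero mode contributes nothing on either side, and for the remaining modes $\nu_\alpha^2\geq\lambda_1\nu_\alpha$; summing yields $C_2=\lambda_1$, which matches the theorem's claim that $C_2$ corresponds to the first nonzero eigenvalue.

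For the second pair of inequalities, the key step is to set $g:=\Delta_\mu f$ and observe that $g$ automatically has zero $\mu$-mean: applying Theorem 2.1 with the constant test function $\mathbf 1$ and using $\nabla_\mu\mathbf 1=0$ gives $\int_G\Delta_\mu f\,d\mu=0$. Hence $g$ is orthogonal to the kernel of $-\Delta_\mu$, so the Poincaré inequality (Theorem 3.2) applies to $g$ and yields $\lambda_1\int_G\|g\|_2^2\,d\mu\leq\int_G\|\nabla_\mu g\|_2^2\,d\mu$, i.e.\ $C_4=1/\lambda_1$. The reverse direction is a spectral upper bound: integration by parts gives $\int_G\|\nabla_\mu g\|_2^2\,d\mu=-\int_G g\cdot\Delta_\mu g\,d\mu$, and since the spectrum of $-\Delta_\mu$ lies in $[0,\lambda_N]$ this is at most $\lambda_N\int_G\|g\|_2^2\,d\mu$, giving $C_3=1/\lambda_N$.

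No genuine obstacle is anticipated: the whole argument is spectral calculus combined with Theorems 2.1 and 3.2. The only mildly subtle point is the automatic mean-free property of $\Delta_\mu f$, which is exactly what lets the Poincaré inequality apply to $g=\Delta_\mu f$ for \emph{all} $f$ without additional hypotheses, and what makes the constants $C_2=\lambda_1$ and $C_4=1/\lambda_1$ sharp.
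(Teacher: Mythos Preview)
Your argument is correct and takes a cleaner, more uniform route than the paper. The paper proves the two upper bounds $C_1$ and $1/C_3$ by a direct Jensen/triangle-inequality computation, obtaining $C_1=2M_{\max}$ and $C_3=1/(2M_{\max})$; for $C_2$ it uses Cauchy--Schwarz followed by the Poincar\'e inequality, and for $C_4$ it uses exactly your observation that $\int_G\Delta_\mu f\,d\mu=0$ together with Poincar\'e. You instead run everything through the spectral identities $\int\|\nabla_\mu f\|^2=\sum\nu_\alpha\|c_\alpha\|^2$ and $\int\|\Delta_\mu f\|^2=\sum\nu_\alpha^2\|c_\alpha\|^2$ (which is essentially the paper's Theorem~3.4), and read off all four constants from the elementary inequalities $\lambda_1\nu_\alpha\le\nu_\alpha^2\le\lambda_N\nu_\alpha$ on the nonzero modes. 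This yields the sharper constants $C_1=\lambda_N$ and $C_3=1/\lambda_N$ (recall $\lambda_N\le 2M_{\max}$ by Theorem~2.2), and it handles the disconnected case without the separate ``sum over components'' step the paper adds at the end, since zero modes contribute nothing on either side.

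One small point: your justification that $g=\Delta_\mu f$ is orthogonal to the kernel of $-\Delta_\mu$ via $\int_G\Delta_\mu f\,d\mu=0$ only checks orthogonality to the \emph{global} constant, which is the full kernel only when $G$ is connected (the hypothesis of Theorem~3.2). If $G$ has several components the kernel is larger. This is easily repaired within your own framework: in the expansion $g=-\sum_\alpha\nu_\alpha c_\alpha\phi_\alpha$ the zero-mode coefficients vanish automatically, so $g$ is orthogonal to the entire kernel without any appeal to Theorem~3.2.
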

 By recursively replacing $f$ with $\Delta_{\mu}^k g$, the inequalities in Theorem 3.3 can be generalized to establish equivalence for all $m \geq 1$. This implies that if a GNN suffers from over-smoothing with respect to one of these measures, it will inevitably suffer from over-smoothing with respect to the others.

\subsection{Spectral difference of proposed measures}
while the preceding subsection established the equivalence of the proposed node similarity measures, we now distinguish them through a spectral lens. Let $\{\alpha_k\}_{k=1}^{nd}$ denote the eigenvalues of $-\Delta_{\mu}$ sorted in ascending order, i.e., $\alpha_1 \leq \alpha_2 \leq \dots \leq \alpha_{nd}$. Let $\{v_k\}_{k=1}^{nd}$ be the corresponding orthonormal eigenfunctions satisfying $\int_{G} v_i \cdot v_j d\mu = \delta_{ij}$.

For any function $f \in C(G)$, its spectral decomposition with respect to $-\Delta_{\mu}$ is given by $f = \sum_{k=1}^{nd} C_k v_k$. Consequently, the Dirichlet energy can be derived as:
\[
\int_{G}\Vert\nabla_{\mu} f\Vert^2_2 d\mu = \int_{G} (-\Delta_{\mu} f) \cdot f d\mu = \int_{G} \left(\sum_{k=1}^{nd}\alpha_k C_{k}v_k\right) \cdot \left(\sum_{k=1}^{nd} C_k v_k\right) d\mu = \sum_{k=1}^{nd}\alpha_k C_k^2.
\]
By extension, the expression for the general $m$-th derivative energy is formulated as follows
\begin{theorem}\label{Thm:spectrum decomposition}
    Given a weighted graph $G=(V,E,\omega,\mu)$ and a function $f\in C(G)$, we have following formula
    \[
    \int_{G}\Vert\nabla_{\mu}^m f\Vert^2_2d\mu=\sum\limits_{1\leq k\leq nd}\alpha^m_kC_k^2.
    \]
\end{theorem}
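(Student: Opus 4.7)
The plan is to prove the identity by a short case analysis on the parity of $m$, using the self-adjointness of $-\Delta_\mu$ (from the integration-by-parts formula of Theorem 2.1) together with the orthonormality of the eigenbasis $\{v_k\}$. Throughout I treat $\{v_k\}$ as an orthonormal eigenbasis of the $L^2(V,d\mu)$ inner product; the existence of such a basis is ensured because $-\Delta_\mu$ is positive semi-definite and symmetric with respect to $\int_G (\cdot)\cdot(\cdot)\,d\mu$.

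First I would handle the even case $m=2\ell$. By the definition given in the statement, $\Vert\nabla_\mu^{2\ell} f\Vert_2 = \Vert(-\Delta_\mu)^\ell f\Vert_2$. Substituting the spectral expansion $f=\sum_k C_k v_k$ gives $(-\Delta_\mu)^\ell f=\sum_k \alpha_k^{\ell} C_k v_k$, and then orthonormality of the $v_k$ under $\int_G(\cdot)\cdot(\cdot)\,d\mu$ yields directly
\begin{equation*}
\int_G \Vert(-\Delta_\mu)^\ell f\Vert_2^2\,d\mu \;=\; \sum_{j,k}\alpha_j^\ell \alpha_k^\ell C_j C_k \int_G v_j\cdot v_k\,d\mu \;=\; \sum_k \alpha_k^{2\ell}C_k^2,
\end{equation*}
which is the claim for $m=2\ell$.

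For the odd case $m=2\ell+1$, I would set $g:=(-\Delta_\mu)^\ell f$ and apply integration by parts (Theorem 2.1) to convert the gradient energy into a Laplacian inner product:
\begin{equation*}
\int_G \Vert\nabla_\mu g\Vert_2^2\,d\mu \;=\; -\int_G (\Delta_\mu g)\cdot g\,d\mu \;=\; \int_G \bigl((-\Delta_\mu)^{\ell+1} f\bigr)\cdot \bigl((-\Delta_\mu)^\ell f\bigr)\,d\mu.
\end{equation*}
Expanding both factors in the eigenbasis gives $\sum_k \alpha_k^{\ell+1}\alpha_k^\ell C_k^2 = \sum_k \alpha_k^{2\ell+1}C_k^2$ by orthonormality, completing the odd case.

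The only delicate point, and the step I expect to require the most care, is the base case $m=0$: the formal identity $\int_G \Vert f\Vert_2^2\,d\mu=\sum_k C_k^2$ only holds for the projection of $f$ onto the orthogonal complement of the kernel of $-\Delta_\mu$, because the zero-order operator is defined as the limit $(-\Delta_\mu)^0 f = f - \tfrac{1}{|V|_\mu}\int_G f\,d\mu$, which subtracts off the constant (kernel) component. Since the constant eigenfunction corresponds to $\alpha=0$, the contribution from the kernel is naturally killed by the convention $\alpha^0 \mapsto 0$ on the kernel (equivalently, the coefficient $C_k$ for the constant eigenvector is treated as zero after the projection). With this convention absorbed into the statement, the even-case argument applies uniformly, and the theorem follows. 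A clean way to record this is to write $f=f_0 + \bar f$ where $\bar f$ is the mean and $f_0$ lies in the range of $-\Delta_\mu$, observe that the fractional powers act trivially on $\bar f$ (annihilating it for $m\geq 1$ and for the defined $m=0$ limit), and then apply the two cases above to $f_0$.
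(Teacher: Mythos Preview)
Your proposal is correct and follows essentially the same route as the paper's proof: a parity split on $m$, direct orthonormal expansion for even $m$, and integration by parts followed by orthonormal expansion for odd $m$. Your additional care about the $m=0$ case (interpreting $0^0$ via the limit defining $(-\Delta_\mu)^0$ so that the kernel component is annihilated) is a point the paper's proof leaves implicit, but it does not change the structure of the argument.
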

According to Theorems 2.1 and 2.2, the eigenvalues satisfy $0\leq \alpha_k\leq 2$ for all $1\leq k\leq nd$. Consequently, as $m$ increases, the spectral contribution of high-frequency components ($\alpha_k > 1$) is amplified, whereas low-frequency components ($\alpha_k < 1$) are attenuated. This property allows us to vary $m$ to capture distinct spectral information of the features.
\begin{section}{Theoretical analysis of over-smoothing}
    In this section, we investigate the over-smoothing phenomenon in Simple Graph Convolution (SGC) with general reversible random walks and establish its relationship with the spectral gap. Over-smoothing in message-passing GNNs is typically attributed to spatial propagation. SGC~\citep{wang2021dissecting} simplifies GCN by removing the non-linear transformation between layers while achieving competitive performance. Its formulation is given by:
\[
Y = \text{softmax}(\tilde{A}^K_{\text{sym}}XW),
\]
where $K$ denotes the number of layers and $W$ represents the learnable parameters. \citet{wang2021dissecting} demonstrated that the feature propagation stage in SGC is equivalent to heat diffusion on the graph with a fixed step size $\Delta t=1$. Since heat diffusion causes features within each connected component to converge to a constant vector, they proposed controlling the time step size to mitigate over-smoothing. In this work, we extend this analysis to general reversible random walks and derive a sharp decay rate characterizing the over-smoothing process.  
\begin{theorem}
    Let $G=(V, E,\omega,\mu)$ be a weighted graph. If $f_t$ is a solution of the heat equation
    \[
         \frac{\partial f_t}{\partial t} = \Delta_{\mu} f_t 
    \]
    with initial condition $f(t=0)= f_0$, 
    then there exists constant $C_5\textgreater0$ such that Dirichlet energy of $f_t$ will decay exponentially when $t$ tends to infinity
    \[
        \int_{G}\Vert\nabla_{\mu} f_t\Vert_2^2d\mu\leq e^{-2\lambda_1(-\Delta_{\mu}) t}\int_{G}\Vert\nabla_{\mu} f_0\Vert_2^2d\mu
    \]
    Suppose further $\forall i\in[n],\sum_{j\in\mathcal{N}_i}\omega_{ij}\leq\mu_{i}$, Let random walk matrix be $P_{\mu}=\Delta_{\mu}+I_n$, then for the arbitrary function $f$ on the graph, we have the decay rate as follows
    \[
   \int_{G}\Vert \nabla_{\mu} P^kf\Vert_2^2d\mu\leq(1-(2-\lambda_N(-\Delta_{\mu}))\lambda_1(-\Delta_{\mu}))^k\int_{G}\Vert \nabla_{\mu} f\Vert_2^2d\mu
    \]
    where $k\in\mathbb{N}^{*}$, specifically assuming the graph is bipartie, then $\lambda_{N}\neq 2$. Also for $\tilde{A}_{sym}$ we specify $\omega_{ij}=1,\mu_i=D_i+1$, we have decay rate
    \[
   \int_{G}\Vert \nabla_{\mu}(\tilde{D}^{-\frac{1}{2}} \tilde{A}_{sym}^kf)\Vert_2^2d\mu\leq(1-(2-\lambda_N(-\tilde{\Delta}_{rw-adj}))\lambda_1(-\tilde{\Delta}_{rw-adj}))^k\int_{G}\Vert \nabla_{\mu} (\tilde{D}^{-\frac{1}{2}}f)\Vert_2^2d\mu
    \]
\end{theorem}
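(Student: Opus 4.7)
The plan is to reduce all three inequalities to the spectral identity from Theorem~\ref{Thm:spectrum decomposition}, after which each claim becomes an elementary scalar inequality in the eigenvalues of $-\Delta_\mu$. Expanding $f_0 = \sum_k C_k v_k$ in the orthonormal eigenbasis, the heat equation has spectral solution $f_t = \sum_k C_k e^{-\alpha_k t} v_k$, and the $m=1$ case of Theorem~\ref{Thm:spectrum decomposition} applied to $f_t$ yields
\[
\int_G \|\nabla_\mu f_t\|_2^2\, d\mu = \sum_k \alpha_k C_k^2 e^{-2\alpha_k t}.
\]
For $\alpha_k = 0$ the summand vanishes, while for $\alpha_k > 0$ one has $\alpha_k \geq \lambda_1(-\Delta_\mu)$, so $e^{-2\alpha_k t} \leq e^{-2\lambda_1 t}$; summing termwise produces the claimed exponential decay.

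For the discrete part, setting $P_\mu = \Delta_\mu + I$, the hypothesis $\sum_j \omega_{ij} \leq \mu_i$ together with Theorem~2.2 ensures the eigenvalues $1-\alpha_k$ of $P_\mu$ lie in $[-1,1]$. Since $P_\mu^k f = \sum_j (1-\alpha_j)^k C_j v_j$, the same identity yields
\[
\int_G \|\nabla_\mu P_\mu^k f\|_2^2\, d\mu = \sum_j \alpha_j (1-\alpha_j)^{2k} C_j^2.
\]
The bound then reduces to showing $(1-\alpha_j)^2 \leq 1 - (2-\lambda_N)\lambda_1$ for every $\alpha_j \in [\lambda_1, \lambda_N]$. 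Rewriting $(1-\alpha)^2 = 1 - \alpha(2-\alpha)$, this is equivalent to $\alpha(2-\alpha) \geq \lambda_1(2-\lambda_N)$, and since $g(\alpha) = \alpha(2-\alpha)$ is concave on $[0,2]$ its minimum over $[\lambda_1,\lambda_N]$ is attained at an endpoint; both endpoint checks $\lambda_1(2-\lambda_1) \geq \lambda_1(2-\lambda_N)$ and $\lambda_N(2-\lambda_N) \geq \lambda_1(2-\lambda_N)$ follow immediately from $\lambda_1 \leq \lambda_N \leq 2$. Taking $k$-th powers finishes the inequality.

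The bipartite remark comes from observing that $\lambda_N(-\Delta_\mu) = 2$ is equivalent to $-1$ being an eigenvalue of $P_\mu$, which characterises period-two behaviour of the underlying chain; hence for non-bipartite (or self-looped) graphs the factor $(2-\lambda_N)\lambda_1$ is strictly positive and we obtain genuine geometric decay. For the final inequality I would exploit the similarity $\tilde{A}_{\mathrm{sym}} = \tilde{D}^{1/2} \tilde{A}_{\mathrm{rw}} \tilde{D}^{-1/2}$, so that $\tilde{A}_{\mathrm{sym}}^k = \tilde{D}^{1/2} \tilde{A}_{\mathrm{rw}}^k \tilde{D}^{-1/2}$ and hence $\tilde{D}^{-1/2} \tilde{A}_{\mathrm{sym}}^k f = \tilde{A}_{\mathrm{rw}}^k (\tilde{D}^{-1/2} f)$. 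With $\omega_{ij}=1$ and $\mu_i = D_i + 1$ one checks $P_\mu = \tilde{A}_{\mathrm{rw}}$ and $\sum_j \omega_{ij} = D_i \leq D_i + 1 = \mu_i$, so the second claim applied to $g := \tilde{D}^{-1/2} f$ gives the required decay directly.

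I expect the main obstacle to be twofold: isolating the zero-eigenvalue mode cleanly (it contributes nothing to any gradient-based energy and must be excluded from, rather than bounded in, the decay estimate, which is mildly delicate once $f$ is vector-valued and the constant subspace has dimension $d$), and the sharpness of the constant $1-(2-\lambda_N)\lambda_1$, which requires the simultaneous endpoint verification on the concave parabola $\alpha(2-\alpha)$ rather than examination of a single extremum. The remainder is essentially bookkeeping in the orthonormal eigenbasis.
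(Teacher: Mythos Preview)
Your proposal is correct, and it takes a genuinely different route from the paper's proof.

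For the continuous part, the paper differentiates the Dirichlet energy in time, uses integration by parts to obtain $\frac{d}{dt}\int_G\|\nabla_\mu f_t\|_2^2\,d\mu = -2\int_G\|\Delta_\mu f_t\|_2^2\,d\mu$, invokes the inequality $\int_G\|\Delta_\mu f\|_2^2\,d\mu \ge \lambda_1\int_G\|\nabla_\mu f\|_2^2\,d\mu$ from Theorem~3.3, and concludes by a Gr\"onwall argument. For the discrete part, the paper expands $\int_G\|\nabla_\mu Pf\|_2^2\,d\mu$ algebraically as $\int_G\|\nabla_\mu\Delta_\mu f\|_2^2\,d\mu - 2\int_G\|\Delta_\mu f\|_2^2\,d\mu + \int_G\|\nabla_\mu f\|_2^2\,d\mu$ and then bounds the first term by $\lambda_N\int_G\|\Delta_\mu f\|_2^2\,d\mu$ and the resulting combination by $\lambda_1\int_G\|\nabla_\mu f\|_2^2\,d\mu$, again via Theorem~3.3. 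You instead invoke the spectral identity of Theorem~\ref{Thm:spectrum decomposition} once and reduce everything to the scalar inequality $(1-\alpha)^2 \le 1-(2-\lambda_N)\lambda_1$ on $[\lambda_1,\lambda_N]$, checked at the endpoints of the concave parabola $\alpha(2-\alpha)$. Your argument is shorter and makes sharpness immediate (equality holds when $f$ is an eigenfunction at $\lambda_1$ or $\lambda_N$); the paper's energy-method approach avoids committing to a full eigen-expansion and ties the decay constants back to the norm equivalences in Theorem~3.3, which is perhaps more in keeping with the paper's PDE-flavoured narrative. The treatment of $\tilde A_{\mathrm{sym}}$ via the similarity $\tilde A_{\mathrm{sym}} = \tilde D^{1/2}\tilde A_{\mathrm{rw}}\tilde D^{-1/2}$ is identical in both.
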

We remark that the decay rate established in Theorem 4.1 is sharp. 
For the continuous heat evolution, consider the solution $f_t = v_1 e^{-\lambda_1(-\Delta_{\mu}) t}$, where $v_1$ is the eigenfunction corresponding to the eigenvalue $\lambda_1(-\Delta_{\mu})$. It is evident that $f_t$ satisfies the heat equation and yields the equality:
\[
\int_{G}\Vert\nabla_{\mu} f_t\Vert_2^2d\mu = e^{-2\lambda_{1}(-\Delta_{\mu})t}\int_{G}\Vert\nabla_{\mu} f_0\Vert_2^2d\mu.
\]
For the discrete random walk, consider a bipartite graph $K_{2,2}$ with vertex set $V=\{1,2,3,4\}$ and edges $E=\{(1,3),(1,4),(2,3),(2,4)\}$. Given a feature signal $X$ such that $X(1)=X(2)=1$ and $X(3)=X(4)=0$, we observe that the energy remains invariant: $\int_{G}\Vert \nabla_{\mu} A_{rw}X\Vert_2^2d\mu=\int_{G}\Vert \nabla_{\mu} X\Vert_2^2d\mu$. Consequently, assuming the graph is non-bipartite (implying $\lambda_N \neq 2$), over-smoothing occurs as the energy strictly exponentially decays.

The spectral gap $\lambda_1(-\Delta_{\mu})$ is intrinsically linked to the trade-off between over-smoothing and over-squashing~\citep{karhadkar2023fosr}. Specifically, a larger $\lambda_1$ mitigates over-squashing but exacerbates over-smoothing. Theorem 4.1 theoretically corroborates this trade-off in the context of general reversible random walks.
\end{section}
\section{Experimental verification }
In the section, we will experiment that vanilla attention based graph neural networks will be over-smooth with respect to Laplacian energy. We conduct our model on three well-known datasets: Cora, Citeseer, and Pubmed \citep{yang2016revisiting}. We employ Graph Convolutional Networks (GCN) and Graph Attention Networks (GAT) as our backbone architectures. As illustrated in Figure 1, when the network depth reaches 256 layers, the Laplacian energy drops below $10^{-14}$, indicating the onset of over-smoothing as measured by Laplacian energy.
\begin{figure}
  \centering
  \includegraphics[width=1\textwidth]{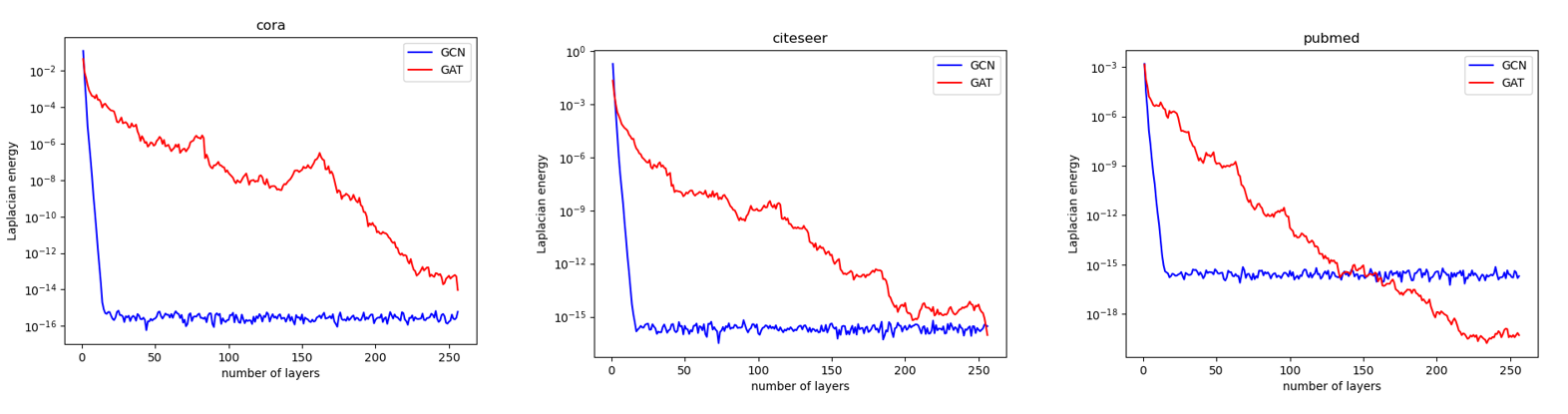}
  \caption{Evolution of Laplacian energy on the Cora dataset at initialization (y-axis in log scale).}
\end{figure}

\section{Conclusion}
We have proposed a series of node similarity measures. We believe that they will be useful and inspire us to address over-smoothing. More works need to be done to explore them.
\medskip

\bibliographystyle{apalike}

\small


\appendix

\section{Experiment details}

\subsection{Running Environment}
All our normalization methods and base models are implemented in PyTorch. Experiment that apply GCN with bias with different normalization methods on dataset Cora with no missing features and experiment that apply GAT with bias with different normalization methods on dataset Citeseer with 100$\%$ missing features are conducted on a machine with NVIDIA GeForce RTX 3070 Ti Laptop 16 GB GPU and 12th Gen Intel(R) Core(TM) i7-12700H CPU. Other experiments are conducted on a machine with NVIDIA GeForce RTX 3080 Laptop 16 GB GPU and AMD Ryzen 9 5900HX with Radeon Graphics CPU.
\subsection{Dataset Statistics}
\begin{table}[h!]
  \caption{Dataset Statistics}
  \label{sample-table}
  \centering
  \begin{tabular}{llll}
    \toprule
    Datasets &Cora &Citeseer &Pubmed\\
    \midrule
    Nodes &2708&3327&19717\\
    Edges &5429&4732&44338\\
    Features &1433&3703&500\\
    Classes  &7&6&3\\
    \bottomrule
  \end{tabular}
\end{table}

\section{Proofs of theorems}
We denote the bounds of weights as $\omega_{\max} = \max_{i,j} \omega_{ij}$, $\omega_{\min} = \min_{i,j} \omega_{ij}$, $\mu_{\max} = \max_{i} \mu_{i}$, and $\mu_{\min} = \min_{i} \mu_{i}$. 
\subsection{Proof of Theorem 2.1}
    Given a weighted graph $G=(V,E,\omega,\mu)$ and vectored-valued functions $f,g: V\rightarrow\mathbb{R}^{d}$. First, we suppose $d=1$, we have the following equality
    \begin{align*}
  &\sum_{i\in[n]}\sum_{j\in\mathcal{N}_i}\omega_{ij}f(i)g(i)\\
        &=\sum_{i\sim j}\omega_{ij}(f(i)g(i)+f(j)g(j))\\
        &=\sum_{i\in[n]}\sum_{j\in\mathcal{N}_i}\omega_{ij}f(j)g(j)
    \end{align*}
    and 
    \begin{align*}
    &\sum_{i\in[n]}\sum_{j\in\mathcal{N}_i}\omega_{ij}f(j)g(i)\\
        &=\sum_{i\sim j}\omega_{ij}(f(j)g(i)+f(i)g(j))\\
        &=\sum_{i\in[n]}\sum_{j\in\mathcal{N}_i}\omega_{ij}f(i)g(j)
    \end{align*}
    Combining above equalities we deduce
    \begin{align*}
      \int_{G}\Delta_{\mu} f\cdot gd\mu 
      &= \sum_{i=1}^n\sum_{j\in\mathcal{N}_i}\omega_{ij}(f(j)-f(i))g(i)\\
         & = \sum_{i=1}^n\sum_{j\in\mathcal{N}_i}(\omega_{ij}f(j)g(i)-\omega_{ij}f(i)g(i))\\
         & = \sum_{i=1}^n\sum_{j\in\mathcal{N}_i}(\omega_{ij}f(i)g(j)-\omega_{ij}f(j)g(j))\\
         &=\sum_{i=1}^n\sum_{j\in\mathcal{N}_i}\omega_{ij}(f(i)-f(j))g(j)\\       
    \end{align*}

    Thus adding the right-hand side of the first line and last line we get
    \begin{equation*}
        \begin{aligned}
        \int_{G}\Delta_{\mu} f\cdot gd\mu 
        &= \sum_{i=1}^n\sum_{j\in\mathcal{N}_i}\frac{\omega_{ij}}{2}(f(j)-f(i))(g(i)-g(j))\\&=-\int_{G}\nabla_{\mu} f\cdot\nabla_{\mu} g
    \end{aligned}
    \end{equation*}

    Now suppose general $d$ and $f=(f_1,\dots,f_d)^T, g=(g_1,\dots,g_d)^T$, then a direct calculation yield 
    \begin{align*}
        &\int_V\Delta_{\mu} f\cdot gd\mu\\
        &=\int_V\Delta_{\mu} f_1\cdot g_1d\mu+\dots+\Delta_{\mu} f_d\cdot g_dd\mu\\
        &=-\int_V\nabla_{\mu} f_1\cdot\nabla_{\mu} g_1d\mu-\dots-\nabla_{\mu} f_d\cdot \nabla_{\mu}g_dd\mu\\
        &=-\int_V\nabla_{\mu} f\cdot\nabla_{\mu} gd\mu
    \end{align*}
\subsection{Proof of Theorem 2.2}
Suppose $\alpha_1\leq\dots\leq\alpha_{nd}$ are eigenvalues with multiplicity and $v_1,\dots,v_{nd}$ are eigenvectors corresponding to eigenvalues respectively such that $\int_{G}\Vert v_k\Vert_2^2d\mu=1,\enspace\forall 1\leq k\leq nd$ and $\int_{G}v_i\cdot v_jd\mu=0,\enspace \forall i\neq j$. Apparently $\lambda_N=\alpha_{nd}$. For any vector-valued function $f: V\rightarrow\mathbb{R}^{d}$, suppose $f=c_1v_1+\dots+c_{nd}v_{nd}$, then we have  
\begin{align*}
    \int_{G}\Vert \nabla_{\mu}f\Vert_2^2d\mu&=\int_{G}-\Delta_{\mu}f\cdot fd\mu\\
    &=\alpha_1c_1^2+\cdots+\alpha_{nd}c_{nd}^2\\
    &\leq \alpha_{nd}(c_1^2+\cdots+c_{nd}^2)\\
    &=\lambda_N\int_{G}\Vert f\Vert_2^2
\end{align*}
Obviously we have $\int_{G}\Vert \nabla_{\mu}v_{nd}\Vert_2^2d\mu=\lambda_N\int_{G}\Vert v_{nd}\Vert_2^2$. We compute that for all vector-valued function $g: V\rightarrow\mathbb{R}^{d}$
\begin{align*}
    \int_{G}\Vert \nabla_{\mu}g\Vert_2^2d\mu&=\sum_{i\in[n]}(\sum_{j\in\mathcal{N}_i}\frac{1}{2}\omega_{ij}\Vert g(j)-g(i)\Vert_2^2)\\
    &\leq\sum_{i\in[n]}(\sum_{j\in\mathcal{N}_i}\omega_{ij}(\Vert g(j)\Vert_2^2+\Vert g(i)\Vert_2^2))\\
    &=2\sum_{i\in[n]}(\frac{\sum_{j\in\mathcal{N}_i}\omega_{ij}}{\mu_i})\Vert g(i)\Vert_2^2\mu_i\\
    &\leq 2M_{max}\int_{G}\Vert g\Vert_2^2d\mu
\end{align*}
Therefore we have $\lambda_N\leq 2M_{max}$
\subsection{Proof of Theorem 3.1}
(1)$\enspace$Suppose $f: V\rightarrow\mathbb{R}^d$ is a constant vector-valued function on the connected graph $G=(V, E,\omega,\mu)$, we wish to prove that $\mathcal{E}_{m}(f)=0$. Case $m\leq 1$ is well known. For case $m\geq 2$, we compute that
\[
\Delta_{\mu}X(i)=\frac{\omega_{ij}\big{(}X(j)-X(i)\big{)}}{\mu_{i}}=0
\]
Therefore if $m$ is even, $\mathcal{E}_m(X)=\int_{G}\Vert \Delta_{\mu}^{\frac{m}{2}}X\Vert_2^2d\mu=0$. If $m$ is odd, $\mathcal{E}_m(X)=\int_{G}\Vert \nabla_{\mu}\Delta_{\mu}^{\frac{m-1}{2}} X\Vert_2^2d\mu=0$. Now Suppose that for some $m$, $\mathcal{E}_m(f)=0$. we prove inductively that $f$ is a constant function on the graph. First, suppose $d=1$ , case $m=1$ is obvious. If $m=2$, then $\Delta_{\mu} X=0$. Suppose $f$ attains its maximum at node i, then we have \[
0=\Delta_{\mu} f(i)=\sum_{j\in\mathcal{N}_i}\frac{\omega_{ij}}{\mu_i}(f(j)-f(i))\leq 0
\]
From this we know that for all $j\in\mathcal{N}_i$, $f(j) = f(i)$. Because $G$ is connected, we conclude that for all $k\in V$, $f(k)=X(i)$. thus $f$ is a constant function. If $m=3$,  then $\Delta_{\mu} f$ is a constant function, suppose $\Delta_{\mu} f=c$. If $c > 0$, then similarly we suppose f attains its maximum at node i, then 
\[c=\Delta_{\mu} f(i)=\sum_{j\in\mathcal{N}_i}\frac{\omega_{ij}}{\mu_i}(f(j)-f(i))\leq 0
\]
this is a contradiction. If $c< 0$, then similarly we suppose $f$ attains its minimum at node i, then 
\[c=\Delta_{\mu} f(i)=\sum_{j\in\mathcal{N}_i}\frac{\omega_{ij}}{\mu_i}(f(j)-f(i))\leq 0
\]
this is also a contradiction. Thus $c=0$, from the analysis when $m=2$ we know that $f$ is a constant function. For general $m$, if $m$ is even, then $\Delta_{\mu}^{\frac{m}{2}}f=0$, using results when $m=2,3$ inductively we conclude $f$ is a constant function. If $m$ is odd, then $\Delta_{\mu}^{\frac{m-1}{2}}f$ is a constant vector, using results when $m=2,3$, we conclude that $f$ is a constant.

Now suppose for general dimension $d$, $f=(f_1,\dots,f_d)^T$. Then for arbitrary $m$, if $m$ is even. $\Delta_{\mu}^{\frac{m}{2}}f=(\Delta_{\mu}^{\frac{m}{2}}f_1,\dots,\Delta_{\mu}^{\frac{m}{2}}f_d)^T=0$. Therefore for all $k$, $\Delta_{\mu}^{\frac{m}{2}}f_k=0$, thus $f_k$ is a constant, so $f$ is a constant function. If $m$ is odd, the proof is similar.

(2)$\enspace$ For arbitrary $m$ and function $f^1,f^2$, we wish to prove 
\[
\gamma_{m}(f^1+f^2)\leq\gamma_{m}(f^1)+\gamma_{m}(f^2)
\]
For $m=0$, we compute by Minkowski inequality
\begin{align*}
    \gamma_{0}(f^1+f^2)&=\bigg{(}\int_{G}\left\Vert (f^1+f^2)-\frac{1}{\vert V\vert_{\mu}}\int_{G}(f^1+f^2)d\mu\right\Vert_2^2d\mu\bigg{)}^{\frac{1}{2}}\\
    &=\bigg{(}\sum_{i\in[n]}\sum_{1\leq k\leq d}\left\vert f_k^1(i)-\frac{1}{\vert V\vert_{\mu}}\int_{G}f_k^1d\mu+f^2_k(i)-\frac{1}{\vert V\vert_{\mu}}\int_{G}f_k^2d\mu\right\vert^2\mu_i\bigg{)}^{\frac{1}{2}}\\
    &\leq\bigg{(}\sum_{i\in[n]}\sum_{1\leq k\leq d}\big{(}\left\vert f_k^1(i)-\frac{1}{\vert V\vert_{\mu}}\int_{G}f_k^1d\mu\right\vert\sqrt{\mu_i}+\left\vert f^2_k(i)-\frac{1}{\vert V\vert_{\mu}}\int_{G}f_k^2d\mu\right\vert\sqrt{\mu_i}\big{)}^2\bigg{)}^{\frac{1}{2}}\\
    &\leq\bigg{(}\sum_{i\in[n]}\sum_{1\leq k\leq d}(\vert f_k^1-\frac{1}{\vert V\vert_{\mu}}\int_{G}f_k^1d\mu\vert)^2\mu_i\bigg{)}^{\frac{1}{2}}+\bigg{(}\sum_{i\in[n]}\sum_{1\leq k\leq d}(\vert f^2_k-\frac{1}{\vert V\vert_{\mu}}\int_{G}f_k^2d\mu\vert)^2\mu_i\bigg{)}^{\frac{1}{2}}\\
    &=\gamma_{0}(f^1)+\gamma_{0}(f^2)
\end{align*}
For $m=1$, we compute by Minkowski inequality
\begin{align*}
    \gamma_{1}(f^1+f^2)&=\big{(}\int_{G}\Vert\nabla_{\mu}(f^1+f^2)\Vert_2^2d\mu\big{)}^{\frac{1}{2}}\\
    &=\bigg{(}\sum_{i\in[n]}\sum_{j\in\mathcal{N}_i}\sum_{1\leq k\leq d}\omega_{ij}\vert f^1_k(j)-f^1_k(i)+f^2_k(j)-f^2_k(i)\vert^2\bigg{)}^{\frac{1}{2}}\\
    &\leq\bigg{(}\sum_{i\in[n]}\sum_{j\in\mathcal{N}_i}\sum_{1\leq k\leq d}( \omega_{ij}^{\frac{1}{2}}\vert f^1_k(j)-f^1_k(i)\vert+\omega_{ij}^{\frac{1}{2}}\vert f^2_k(j)-f^2_k(i)\vert)^2\bigg{)}^{\frac{1}{2}}\\
    &\leq\bigg{(}\sum_{i\in[n]}\sum_{j\in\mathcal{N}_i}\sum_{1\leq k\leq d}( \omega_{ij}^{\frac{1}{2}}\vert f^1_k(j)-f^1_k(i)\vert)^2\bigg{)}^{\frac{1}{2}}+\bigg{(}\sum_{i\in[n]}\sum_{j\in\mathcal{N}_i}\sum_{1\leq k\leq d}( \omega_{ij}^{\frac{1}{2}}\vert f^2_k(j)-f^2_k(i)\vert)^2\bigg{)}^{\frac{1}{2}}\\
    &=\gamma_{1}(f^1)+\gamma_{1}(f^2)
\end{align*}
For $m=2$, we compute by Minkowski inequality
\begin{align*}
    \gamma_{2}(f^1+f^2)&=\big{(}\int_{G}\Vert\Delta_{\mu}(X^1+X^2)\Vert_2^2d\mu\big{)}^{\frac{1}{2}}\\
    &=\big{(}\sum_{i\in[n]}\Vert\Delta_{\mu}f^1(i)+\Delta_{\mu}f^2(i)\Vert_2^2\mu_i\big{)}^{\frac{1}{2}}\\
    &=\bigg{(}\sum_{i\in[n]}\sum_{1\leq k\leq d}\vert\Delta_{\mu}f_k^1(i)+\Delta_{\mu}f_k^2(i)\vert^2\mu_i\bigg{)}^{\frac{1}{2}}\\
    &=\bigg{(}\sum_{i\in[n]}\sum_{1\leq k\leq d}\vert\mu_i^{\frac{1}{2}}\Delta_{\mu}f_k^1(i)+\mu_i^{\frac{1}{2}}\Delta_{\mu}f_k^2(i)\vert^2\bigg{)}^{\frac{1}{2}}\\
    &\leq\bigg{(}\sum_{i\in[n]}\sum_{1\leq k\leq d}\vert\mu_i^{\frac{1}{2}}\Delta_{\mu}f_k^1(i)\vert^2\bigg{)}^{\frac{1}{2}}+\bigg{(}\sum_{i\in[n]}\sum_{1\leq k\leq d}\vert\mu_i^{\frac{1}{2}}\Delta_{\mu}f_k^2(i)\vert^2\bigg{)}^{\frac{1}{2}}\\
    &=\gamma_{2}^2(X^1)+\gamma_{2}^2(X^2)
\end{align*}
For general $m\geq3$, if $m$ is even, then
\begin{align*}
    \gamma_{m}(f^1+f^2)&=(\int_{G}\Vert\Delta_{\mu}^{\frac{m}{2}}(f^1+f^2)\Vert_2^2d\mu)^{\frac{1}{2}}\\
    &=\big{(}\int_{G}\Vert\Delta_{\mu}(\Delta_{\mu}^{\frac{m}{2}-1}f^1+\Delta_{\mu}^{\frac{m}{2}-1}f^2)\Vert_2^2d\mu\big{)}^{\frac{1}{2}}\\
    &\leq\big{(}\int_{G}\Vert\Delta_{\mu}(\Delta_{\mu}^{\frac{m}{2}-1}f^1)\Vert_2^2d\mu\big{)}^{\frac{1}{2}}+\big{(}\int_{G}\Vert\Delta_{\mu}(\Delta_{\mu}^{\frac{m}{2}-1}f^2)\Vert_2^2d\mu\big{)}^{\frac{1}{2}}\\
    &=\gamma_{m}(f^1)+\gamma_{m}(f^2)
\end{align*}
If $m$ is odd, then
\begin{align*}
    \gamma_{m}(f^1+f^2)&=\big{(}\int_{G}\Vert\nabla_{\mu}\Delta_{\mu}^{\frac{m-1}{2}}(f^1+f^2)\Vert_2^2d\mu\big{)}^{\frac{1}{2}}\\
    &=\big{(}\int_{G}\Vert\nabla_{\mu}(\Delta_{\mu}^{\frac{m-1}{2}}f^1+\Delta_{\mu}^{\frac{m-1}{2}}f^2)\Vert_2^2d\mu\big{)}^{\frac{1}{2}}\\
    &\leq\big{(}\int_{G}\Vert\nabla_{\mu}(\Delta_{\mu}^{\frac{m-1}{2}}f^1)\Vert_2^2d\mu\big{)}^{\frac{1}{2}}+\big{(}\int_{G}\Vert\nabla_{\mu}(\Delta_{\mu}^{\frac{m-1}{2}}f^2)\Vert_2^2d\mu\big{)}^{\frac{1}{2}}\\
    &=\gamma_{m}(f^1)+\gamma_{m}(f^2)
\end{align*}
\subsection{Proof of Theorem 3.2}
Given a weighted connected graph $G=(V, E,\omega,\mu)$ and a vector-valued function $f: V\rightarrow\mathbb{R}^d$. Suppose $0=\alpha_1\leq\alpha_2\leq\dots\leq\alpha_{nd}$ are eigenvalues with multiplicity and $v_1,\dots,v_{nd}$ are eigenvectors corresponding to eigenvalues respectively such that $\int_{G}\Vert v_k\Vert_2^2d\mu=1,\enspace\forall 1\leq k\leq nd$ and $\int_{G}v_i\cdot v_jd\mu=0,\enspace \forall i\neq j$. Since the graph is connected, the multiplicity of $0$ is $d$. Let $\lambda_1$ be the first nonzero eigenvalue, then we have
\begin{align*}
    \int_{G}\Vert \nabla_{\mu}f\Vert_2^2d\mu&=-\int_{G}\Delta_{\mu}f\cdot fd\mu\\
    &=\sum_{d+1\leq k\leq nd}\alpha_kc_k^2\\
    &\geq \lambda_1\sum_{d+1\leq k\leq nd}c_k^2\\
    &=\lambda_1\int_{G}\Vert f-\frac{1}{\vert V\vert_{\mu}}\int_{G}fd\mu\Vert_2^2d\mu
\end{align*}

Next, we wish to prove  
\[
    \int_{G}\Vert\nabla_{\mu} f\Vert_2^2d\mu\leq C\int_{G}\Vert f-\frac{1}{\vert V\vert_{\mu}}\int_{G}fd\mu\Vert_2^2d\mu
\]
We directly compute that
\begin{align*}
    \int_{G}\Vert \nabla_{\mu}g\Vert_2^2d\mu&=\sum_{i\in[n]}\sum_{j\in\mathcal{N}_i}\frac{\omega_{ij}}{2}\Vert g(j)-g(i)\Vert_2^2\\
    &=\sum_{i\in[n]}\sum_{j\in\mathcal{N}_i}\frac{\omega_{ij}}{2}(\sum_{1\leq k\leq d}\vert g_k(j)-g_k(i)\vert^2)\\
    &\leq \sum_{i\in[n]}\sum_{j\in\mathcal{N}_i}\omega_{ij}(\sum_{1\leq k\leq d}(\vert g_k(j)\vert^2+\vert g_k(i)\vert^2))\\
    &=2\sum_{i\in[n]}(\frac{\sum_{j\in\mathcal{N}_i}\omega_{ij}}{\mu_i})\Vert g(i)\Vert_2^2\mu_i\\
    &\leq 2M_{max}\int_{G}\Vert g\Vert_2^2d\mu
\end{align*}
Replace $g$ by $f-\frac{1}{\vert V\vert}\int_{G}fd\mu$ we have the desired result
\subsection{Proof of Theorem 3.3}
Given a weighted graph $G=(V, E,\omega,\mu)$ and a vector-valued function $f: V\rightarrow\mathbb{R}^{d}$, First we wish to prove
\begin{equation}
     C_2\int\Vert\nabla_{\mu} f\Vert_2^2d\mu\leq\int\Vert\Delta_{\mu} f\Vert_2^2d\mu\leq C_1\int\Vert\nabla_{\mu} f\Vert_2^2d\mu
\end{equation}
First, we suppose $d=1$, for the right-hand side inequality, suppose $\mu_i^1=\sum\limits_{j\in\mathcal{N}_i}\omega_{ij}$ we directly compute that 
\begin{align*}
    \int_{G}\Vert\Delta_{\mu}f\Vert_2^2d\mu&=\sum_{i\in[n]}\bigg{(}\sum\limits_{j\in\mathcal{N}_i}\frac{\omega_{ij}\big{(}f(j)-f(i)\big{)}}{\mu_i}\bigg{)}^2\mu_i\\
    &=\sum_{i\in[n]}\bigg{(}\sum\limits_{j\in\mathcal{N}_i}\frac{\omega_{ij}\big{(}f(j)-f(i)\big{)}}{\mu^1_i}\bigg{)}^2\frac{(\mu^1_i)^2}{\mu_i}\\
    &\leq \sum_{i\in[n]}\sum\limits_{j\in\mathcal{N}_i}\omega_{ij}\big{(}f(j)-f(i)\big{)}^2\frac{\mu^1_i}{\mu_i}\\
    &\leq 2M_{max}\int_{G}\Vert\nabla_{\mu}f\Vert_2^2d\mu
\end{align*}
The third line is from Jenson inequality. For the left-hand side, we have
\begin{align*}
\int_{G}\Vert\nabla_{\mu}f\Vert_2^2d\mu&=\int_{G}\left\Vert\nabla_{\mu}(f-\frac{1}{\vert V\vert_{\mu}}\int_{G}fd\mu)\right\Vert_2^2d\mu\\
    &= - \int_{G}\Delta_{\mu}(f-\frac{1}{\vert V\vert_{\mu}}\int_{G}fd\mu)\cdot (f-\frac{1}{\vert V\vert_{\mu}}\int_{G}fd\mu)d\mu\\
    &\leq\sqrt{\int_{G}\Vert\Delta_{\mu}f\Vert_2^2d\mu}\sqrt{\int_{G}\left\Vert f-\frac{1}{\vert V\vert_{\mu}}\int_{G}fd\mu\right\Vert_2^2d\mu}\\
    &\leq \sqrt{\int_{G}\Vert\Delta_{\mu}f\Vert_2^2d\mu}\sqrt{\frac{1}{\lambda_1}\int_{G}\Vert\nabla_{\mu}f\Vert_2^2d\mu}
\end{align*}
The third Line is from Holder inequality. Therefore we have 
\[
\int_{G}\Vert\nabla_{\mu}f\Vert_2^2d\mu\leq \frac{1}{\lambda_1}\int_{G}\Vert\Delta_{\mu}f\Vert_2^2d\mu
\]
For general dimension $d$, suppose $f=(f_1,\dots,f_d)^T$, then by Jenson inequality we have
\begin{align*}
    \int_{G}\Vert\Delta_{\mu}f\Vert_2^2d\mu &= \sum_{i\in[n]}\bigg{(}\sum_{1\leq k \leq d}\sum_{j\in\mathcal{N}_i}\big{(}\frac{\omega_{ij}(f_k(j)-f_k(i))}{\mu_i}\big{)}^2\bigg{)}\mu_i\\
    &=\sum_{i\in[n]}\bigg{(}\sum_{1\leq k \leq d}\sum_{j\in\mathcal{N}_i}\big{(}\frac{\omega_{ij}(f_k(j)-f_k(i))}{\mu^1_i}\big{)}^2\bigg{)}\frac{(\mu^1_i)^2}{\mu_i}\\
    &\leq \sum_{i\in[n]}\bigg{(}\sum_{1\leq k \leq d}\sum_{j\in\mathcal{N}_i}\frac{\omega_{ij}\big{(}f_k(j)-f_k(i)\big{)}^2}{\mu^1_i}\bigg{)}\frac{(\mu^1_i)^2}{\mu_i}\\
    &=\mu_i^1\int_{G}\Vert\nabla_{\mu}f\Vert_2^2d\mu
\end{align*}
and 
\begin{align*}
    \int_{G}\Vert\Delta_{\mu}f\Vert_2^2d\mu &= \sum_{i\in[n]}\bigg{(}\sum_{1\leq k \leq d}\sum_{j\in\mathcal{N}_i}\big{(}\frac{\omega_{ij}(f_k(j)-f_k(i))}{\mu_i}\big{)}^2\bigg{)}\mu_i\\
    &\geq\lambda_1\sum_{1\leq k\leq d}\int_{G}\Vert\nabla_{\mu}f_k\Vert_2^2d\mu\\
    &=\lambda_1\int_{G}\Vert\nabla_{\mu}f\Vert_2^2d\mu
\end{align*}
Thus the proof of (1) is over. After we wish to prove 
\begin{equation}
    C_3\int\Vert\nabla_{\mu} \Delta_{\mu} f\Vert_2^2d\mu\leq\int\Vert\Delta_{\mu} f\Vert_2^2d\mu\leq C_4\int\Vert\nabla_{\mu} \Delta_{\mu} f\Vert_2^2d\mu
\end{equation}
        
For the right-hand side of (2), we first notice that
\[
    \int_{G}\Delta_{\mu}fd\mu=\int_{G}f\Delta_{\mu}1d\mu=0
\]
Hence by Poincare inequality, we have
\begin{align*}
    \int\Vert\Delta_{\mu} f\Vert_2^2d\mu&=\int\left\Vert\Delta_{\mu} f-\frac{1}{\vert V\vert_{\mu}}\int_{G}\Delta_{\mu}fd\mu\right\Vert_2^2d\mu\\
    &\leq \frac{1}{\lambda_1}\int\left\Vert\nabla_{\mu}\Delta_{\mu} f\right\Vert_2^2d\mu
\end{align*}
For the left-hand side of (2), we notice that for any vector-valued function $g$, we have
\begin{align*}
    \int\Vert\nabla_{\mu} g\Vert_2^2d\mu&=\sum_{i\in[n]}\sum_{j\in\mathcal{N}_i}\frac{\omega_{ij}}{2}\Vert g(j)-g(i)\Vert_2^2\\
    &\leq\sum_{i\in[n]}\sum_{j\in\mathcal{N}_i}\omega_{ij}(\Vert g(j)\Vert_2^2+\Vert g(i)\Vert_2^2)\\
    &=2\sum_{i\in[n]}(\frac{\sum_{j\in\mathcal{N}_i}\omega_{ij}}{\mu_i})\Vert g(i)\Vert_2^2\mu_i\\
    &\leq 2M_{max}\int_{G}\Vert g\Vert_2^2d\mu
\end{align*}
Replace $g$ by $\Delta_{\mu}f$ we have the left-hand side of (2). Now suppose $G=(V, E,\omega,\mu)$ is not connected and has connected components $V_1,\dots, V_l$, then for the right-hand side of (1) we have 
\begin{align*}
    \int_{G}\Vert\Delta_{\mu}f\Vert_2^2d\mu &= \sum_{1\leq k\leq l}\int_{V_k}\Vert\Delta_{\mu}f\Vert_2^2d\mu\\
    &\leq C_1\sum_{1\leq k\leq l}\int_{V_k}\Vert\nabla_{\mu}f\Vert_2^2d\mu\\
    &=C_1\int_{G}\Vert\nabla_{\mu}f\Vert_2^2d\mu
\end{align*}
Other cases are similar. In the end we show that $\mathcal{E}_W$ and $\mathcal{E}_D$ are special cases of $\mathcal{E}_m$. $\mathcal{E}_W$ is a special cases of $\mathcal{E}_m$ such that
\[
m=0, p=2, \mu_i=1\enspace \forall i\in[n]
\]
$\mathcal{E}_D$ is a special cases of $\mathcal{E}_m$ such that
\[
m=1, 2=2, \omega_{ij}=2, \mu_{i}=1\enspace \forall i\in[n],j\in\mathcal{N}_i
\]
.
\subsection{Proof of Theorem 3.4}
Suppose f has spectral decomposition 
\[
f=\sum_{1\leq k\leq nd}C_kv_k
\]
If m is even, then
\begin{align*}
    &\int_{G}\Vert\Delta^{\frac{m}{2}}_{\mu} f\Vert_2^2d\mu\\
    &=\int_{G}(\sum_{1\leq k\leq nd}\alpha_k^{\frac{m}{2}}C_kv_k)\cdot (\sum_{1\leq k\leq nd}\alpha_k^{\frac{m}{2}}C_kv_k)d\mu\\
    &=\sum_{1\leq k\leq nd}\alpha_k^{m}C_k^2
\end{align*}
If m is odd, then 
\begin{align*}
    &\int_{G}\Vert\nabla_{\mu}\Delta^{\frac{m-1}{2}}_{\mu} f\Vert_2^2d\mu\\
    &=\int_{G}-\Delta_{\mu}(\Delta_{\mu}^{\frac{m-1}{2}}f)\cdot \Delta_{\mu}^{\frac{m-1}{2}}fd\mu\\
    &=\int_{G}(\sum_{1\leq k\leq nd}\alpha_k^{\frac{m+1}{2}}C_kv_k)\cdot (\sum_{1\leq k\leq nd}\alpha_k^{\frac{m-1}{2}}C_kv_k)d\mu\\
    &=\sum_{1\leq k\leq nd}\alpha_k^{m}C_k^2
\end{align*}
\subsection{Proof of Theorem 4.1}
First we need a lemma.
\begin{lemma}
    $G=(V,E,\omega,\mu)$ is weighted connected graph such that $\sum\limits_{j\in\mathcal{N}_i}\omega_{ij}\leq \mu_i$, then the largest eigenvalue $\lambda_{N}=2$ if and only if $G$ is bipartite
\end{lemma}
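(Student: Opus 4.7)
The plan is to prove the lemma by carefully tracking the equality cases in the chain of inequalities used to derive the spectral bound $\lambda_N(-\Delta_\mu) \leq 2M_{\max}$ in Theorem 2.2. Under the hypothesis $\sum_{j\in\mathcal{N}_i}\omega_{ij}\leq\mu_i$ we have $M_{\max}\leq 1$, so $\lambda_N\leq 2$. The lemma thus asks when this upper bound is attained, and the characterization should come from reading off when both inequalities in that chain degenerate to equalities for the top eigenfunction.

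For the forward direction (bipartite $\Rightarrow \lambda_N=2$), I would take the bipartition $V=V_1\sqcup V_2$ and use the explicit test function $f\equiv 1$ on $V_1$ and $f\equiv -1$ on $V_2$. Since every edge crosses the partition, $(f(j)-f(i))^2=4$ on each edge, so a direct computation gives
\[
\int_G\|\nabla_\mu f\|_2^2 d\mu \;=\; 2\sum_{i\in V}\sum_{j\in\mathcal{N}_i}\omega_{ij},\qquad \int_G\|f\|_2^2 d\mu \;=\; \sum_{i\in V}\mu_i,
\]
and the Rayleigh quotient equals $2$ exactly when the ratio above collapses, i.e.\ when $\sum_{j}\omega_{ij}=\mu_i$. (This shows the forward direction really needs the tightness $\sum_{j}\omega_{ij}=\mu_i$, which is the case implicitly intended by the random-walk hypothesis in Theorem 4.1.)

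For the reverse direction ($\lambda_N=2\Rightarrow$ bipartite), I would let $f\not\equiv 0$ be an eigenfunction attaining the equality $\int_G\|\nabla_\mu f\|_2^2 d\mu=2\int_G\|f\|_2^2 d\mu$ and retrace the two steps of Theorem 2.2's proof. Equality in the first step forces $\|f(j)+f(i)\|_2^2=0$, hence $f(j)=-f(i)$ on every edge; equality in the second step forces $\sum_{j}\omega_{ij}=\mu_i$ on the support of $f$. Setting $V_1=\{i:\,\text{`positive sign'}\}$ and $V_2=\{i:\,\text{`negative sign'}\}$ then gives a candidate bipartition with all edges running across it.

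The main obstacle, and the place where connectedness enters critically, is to rule out the degenerate scenario where $f$ might vanish on part of $V$ so that the sign-based bipartition fails to cover the whole vertex set. The key observation is that the relation $f(j)=-f(i)$ on edges implies the set $Z=\{i:f(i)=0\}$ is closed under taking neighbors, and so is its complement; by connectedness one of them must be empty, and since $f\not\equiv 0$ it must be $Z=\emptyset$. Then $|f|$ is constant along every edge and, again by connectedness, globally constant, so $f$ takes exactly two opposite nonzero values and the induced 2-coloring is a genuine bipartition. Propagating $\sum_{j}\omega_{ij}=\mu_i$ from the support of $f$ to the whole graph is automatic once the support equals $V$, closing the argument.
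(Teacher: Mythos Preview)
Your reverse direction ($\lambda_N=2\Rightarrow$ bipartite) is essentially identical to the paper's argument: the paper rewrites $\int_G\|\nabla_\mu v_N\|_2^2\,d\mu=2\int_G\|v_N\|_2^2\,d\mu$ as the single identity
\[
0=\sum_{i\sim j}\omega_{ij}\|v_N(i)+v_N(j)\|_2^2+2\sum_i\Bigl(\mu_i-\sum_{j\in\mathcal{N}_i}\omega_{ij}\Bigr)\|v_N(i)\|_2^2,
\]
reads off $v_N(j)=-v_N(i)$ on every edge, and then uses connectedness to rule out zeros exactly as you outline. For that half your plan and the paper's proof coincide.

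For the forward direction (bipartite $\Rightarrow\lambda_N=2$) you have actually gone further than the paper: its proof is labelled ``(1)'' and stops after the reverse implication, never supplying a ``(2)''. Your observation that the Rayleigh quotient of the $\pm1$ test function equals $2$ only when $\sum_{j}\omega_{ij}=\mu_i$ at every vertex is correct and in fact exposes a defect in the lemma as stated: under the bare hypothesis $\sum_{j}\omega_{ij}\le\mu_i$ a bipartite graph need not have $\lambda_N=2$ (e.g.\ a single edge with $\omega_{12}=1$ and $\mu_1=\mu_2=2$ gives $\lambda_N=1$). The paper only ever uses the direction it does prove (non-bipartite $\Rightarrow\lambda_N<2$) in Theorem~4.1, so the gap is harmless for the application, but your diagnosis is sharper than what the paper itself provides.
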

\begin{proof}
    (1) If $\lambda_N=2$, let $v_N$ be the corresponding eigenfunction, then $ -\Delta_{\mu}v_N=2v_N$. Hence we have
    \[
    \int_{G}-\Delta_{\mu}v_N\cdot v_Nd\mu =2\int_{G}\Vert v_N\Vert_2^2d\mu
    \]
    Integrate by parts we have
    \[
    \int_{G}\Vert \nabla_{\mu} v_N\Vert_2^2d\mu=2\int_{G}\Vert v_N\Vert_2^2d\mu
    \]
    Hence, we have
    \[
    0=\sum_{i\sim j}\omega_{ij}(2\Vert v_N(i)\Vert^2+2\Vert v_N(j)\Vert_2^2-\Vert v_N(i)-v_N(j)\Vert^2_2)+2\sum_{i}(\mu_i-\sum_{j\in\mathcal{N}_i}\omega_{ij})\Vert v_N(i)\Vert_2^2
    \]
    But
    \[
    \begin{aligned}
        &\sum_{i\sim j}\omega_{ij}(2\Vert v_N(i)\Vert^2+2\Vert v_N(j)\Vert_2^2-\Vert v_N(i)-v_N(j)\Vert^2_2)+2\sum_{i}(\mu_i-\sum_{j\in\mathcal{N}_i}\omega_{ij})\Vert v_N(i)\Vert_2^2\\
        &=\sum_{i\sim j}\omega_{ij}\Vert v_N(i)+v_N(j)\Vert_2^2+2\sum_{i}(\mu_i-\sum_{j\in\mathcal{N}_i}\omega_{ij})\Vert v_N(i)\Vert_2^2\\
    \end{aligned}
    \]
    
    Hence, we have 
    \begin{equation}
    v_N(i)+v_N(j)=0,i\sim j     
    \end{equation}

    If there exist $i\in V$ such that $v_{N}(i)=0$, then by (5) we have $v_{N}(j)=0$ for all $j\in V$, which is a contradiction. Hence let $U=\{i\in V: v_N(i)>0\}$, $V=\{i\in V: v_{N}(i)<0\}$, we have the desired partition.
\end{proof}
Now we come back to the proof of Theorem 4.1
\begin{proof}
(1)$\enspace$Suppose $f$ is a solution to the heat equation with initial condition $f(t=0)=f_0$, then
\begin{align*}
    \frac{\partial \int_{G}\Vert \nabla_{\mu}f\Vert_2^2d\mu}{\partial t} &= 2\int_{G}\nabla_{\mu} f\cdot\nabla_{\mu}\partial_{t}fd\mu\\
    &=-2\int_{G}\Delta_{\mu} f\cdot\partial_{t}fd\mu\\
    &=-2\int_{G}\Vert\Delta_{\mu} f\Vert_2^2d\mu\\
    &\leq -2\lambda_1(-\Delta_{\mu})\int_{G}\Vert\nabla_{\mu} f\Vert_2^2d\mu
\end{align*}
Thus we have
\[
\frac{\partial(e^{2\lambda_1(-\Delta_{\mu})t}\int_{G}\Vert \nabla_{\mu}f\Vert_2^2d\mu)}{\partial t}\leq 0
\]
Therefore
\[
e^{2\lambda_1(-\Delta_{\mu})t}\int_{G}\Vert \nabla_{\mu}f\Vert_2^2d\mu\vert_{t=t}\leq \big{(}e^{2\lambda_1(-\Delta_{\mu})t}\int_{G}\Vert \nabla_{\mu}f\Vert_2^2d\mu\big{)}\vert_{t=0}
\]
In conclusion, we have
\[
\int_{G}\Vert \nabla_{\mu}f\Vert_2^2d\mu\leq e^{-2\lambda_1(-\Delta_{\mu})t}\int_{G}\Vert \nabla_{\mu}f_0\Vert_2^2d\mu
\]

(2)$\enspace$Suppose $f: V\rightarrow\mathbb{R}^{d}$ is an arbitrary vector-valued function on the graph. Then we directly compute that
\begin{align*}
\int_{G}\Vert\nabla_{\mu} Pf\Vert_2^2d\mu & = \int_{G}\nabla_{\mu} Pf\cdot \nabla_{\mu} Pfd\mu\\
& = -\int_{G}\Delta_{\mu} Pf\cdot  Pfd\mu\\
& = -\int_{G}\Delta_{\mu} (\Delta_{\mu}+I)f\cdot(\Delta_{\mu}+I)fd\mu\\
& =  -\int_{G}\Delta^2_{\mu} f\cdot\Delta_{\mu}fd\mu-\int_{G}\Delta^2_{\mu} f\cdot fd\mu-\int_{G}\Vert\Delta_{\mu}f\Vert_2^2d\mu-\int_{G}\Delta_{\mu}f\cdot fd\mu\\
& = \int_{G}\Vert\nabla_{\mu}\Delta_{\mu} f\Vert_{2}^2d\mu-2\int_{G}\Vert\Delta_{\mu}f\Vert_2^2d\mu+\int_{G}\Vert\nabla_{\mu}f\Vert_2^2d\mu\\
&\leq \lambda_N(-\Delta_{\mu})\int_{G}\Vert\Delta_{\mu} f\Vert_{2}^2d\mu-2\int_{G}\Vert\Delta_{\mu} f\Vert_{2}^2d\mu+\int_{G}\Vert\nabla_{\mu}f\Vert_2^2d\mu\\
&\leq(\lambda_N(-\Delta_{\mu})-2)\lambda_1(-\Delta_{\mu})\int_{G}\Vert\nabla_{\mu}f\Vert_2^2d\mu+\int_{G}\Vert\nabla_{\mu}f\Vert_2^2d\mu\\
&=(1+(\lambda_N(-\Delta_{\mu})-2)\lambda_1(-\Delta_{\mu}))\int_{G}\Vert\nabla_{\mu}f\Vert_2^2d\mu
\end{align*}
Therefore we have
\[
\int_{G}\Vert\nabla_{\mu} P^kf\Vert_2^2d\mu\leq\big{(}1+(\lambda_N(-\Delta_{\mu})-2)\lambda_1(-\Delta_{\mu})\big{)}^k\int_{G}\Vert\nabla_{\mu}f\Vert_2^2d\mu
\]
From Lemma B.1 we know that $\lambda_N=2$ if and only if $G$ is bipartite, hence if $G$ is non-bipartite we will have over-smoothing. For $\tilde{A}_{sym}$, we specify $P=\tilde{A}_{rw},\omega_{ij}=1,\mu_i=D_i+1$ and notice that $\tilde{A}_{sym}=\tilde{D}^{\frac{1}{2}}\tilde{A}_{rw}\tilde{D}^{-\frac{1}{2}}$. Therefore we have
\[
\int_{G}\Vert\nabla_{\mu} \tilde{D}^{-\frac{1}{2}}\tilde{A}_{sym}^k(\tilde{D}^{\frac{1}{2}}f)\Vert_2^2d\mu\leq(1+(\lambda_N(-\tilde{\Delta}_{rw-adj})-2)\lambda_1(-\tilde{\Delta}_{rw-adj}))^k\int_{G}\Vert\nabla_{\mu}f\Vert_2^2d\mu
\]
Replace $f$ by $\tilde{D}^{-\frac{1}{2}}g$ we get desired results.
\end{proof}

\end{document}